\theoremstyle{plain}
\newtheorem{theorem}{Theorem}[section]
\newtheorem{proposition}[theorem]{Proposition}
\theoremstyle{definition}
\theoremstyle{remark}
\newlist{myitems}{enumerate}{3}
\setlist[myitems, 1]
{label=\arabic{myitemsi}.,leftmargin=15pt,labelwidth=10pt,labelsep=5pt,
topsep=0pt,parsep=0pt,partopsep=0pt,noitemsep
}
\newcommand{\hqmm}{\texttt{HQ-MM}}
\newcommand{\lssmm}{\texttt{LSS-MM}}
\newif\ifdebug
\newcommand{\round}[1]{\left\lfloor #1 \right\rceil}
\newcommand{\tofloat}[1]{\mbox{float}\left(#1\right)}
\newcommand{\toint}[2]{\mbox{int}_{#1}\left(#2\right)}
\newcommand{\vect}[1]{\boldsymbol{\mathbf{#1}}}
\newcommand{\E}[1]{\mathbb{E}\left[#1\right]}
\newcommand{\Var}[1]{\mathrm{Var}\left[#1\right]}
\newcommand{\Bern}{\mbox{Bern}}
\newcommand{\ev}{\vect e}
\newcommand{\Av}{\vect A}
\newcommand{\Bv}{\vect B}
\newcommand{\Hv}{\vect H}
\newcommand{\Iv}{\vect I}
\newcommand{\Kv}{\vect K}
\newcommand{\Mv}{\vect M}
\newcommand{\Qv}{\vect Q}
\newcommand{\Rv}{\vect R}
\newcommand{\Tv}{\vect T}
\newcommand{\Wv}{\vect W}
\newcommand{\Xv}{\vect X}
\newcommand{\Yv}{\vect Y}
\newcommand{\Zv}{\vect Z}
\newcommand{\Lc}{\mathcal L}
\newcommand{\Eb}{\mathbb E}
\newcommand{\Ib}{\mathbb I}
\newcommand{\Rb}{\mathbb R}
\newcommand{\norm}[1]{\left\lVert#1\right\rVert}
\newcommand{\diag}{\mbox{diag}}
\newtheorem*{rep@theorem}{\rep@title}
\newcommand{\newreptheorem}[2]{%
	\newenvironment{rep#1}[1]{%
		\def\rep@title{#2 \ref{##1}}%
		\begin{rep@theorem}}%
		{\end{rep@theorem}}}
\title{Training Transformers with 4-bit Integers}
\author{%
  Haocheng Xi, Changhao Li, Jianfei Chen, and Jun Zhu \\
  Tsinghua University\\  \texttt{\{xihc20,lichangh20\}@mails.tsinghua.edu.cn, \{jianfeic,dcszj\}@tsinghua.edu.cn} 
}
\begin{document}

\maketitle

\begin{abstract}
Quantizing the activation, weight, and gradient to 4-bit is promising to accelerate  neural network training. However, existing 4-bit training methods require custom numerical formats  which are not supported by contemporary hardware. In this work, we propose a training method for transformers with all matrix multiplications implemented with the INT4 arithmetic. 
Training with an ultra-low INT4 precision is challenging. To achieve this, we carefully analyze the specific structures of activation and gradients in transformers to propose dedicated quantizers for them. For forward propagation, we identify the challenge of outliers and propose a Hadamard quantizer to suppress the outliers. For backpropagation, we leverage the structural sparsity of  gradients by proposing bit splitting and leverage score sampling techniques to quantize gradients accurately. Our algorithm achieves competitive accuracy on a wide range of tasks including natural language understanding, machine translation, and image classification. Unlike previous 4-bit training methods, our algorithm can be implemented on the current generation of GPUs. Our prototypical linear operator implementation is up to 2.2 times faster than the FP16 counterparts and speeds up the training by up to 35.1\%. Our code is available at \text{https://github.com/xijiu9/Train\_Transformers\_with\_INT4.}
\end{abstract}
\section{Introduction}\label{sec: intro}
Training neural networks is computationally demanding. 
Training with low-precision arithmetic (a.k.a., fully quantized training or FQT) is promising to improve computational and memory efficiency. FQT methods add some quantizers and dequantizers in the original full-precision computational graph, and replace expensive floating-point operations with cheap low-precision ones.

Research in FQT aims to reduce the training numerical precision, without sacrificing much convergence speed or accuracy.
The required numerical precision has been reduced from FP16~\cite{micikevicius2018mixed} to FP8~\cite{wang2018training,sun2019hybrid}, INT32+INT8~\cite{banner2018scalable} and INT8+INT5~\cite{chen2020statistical}. 
FP8 training is implemented in Nvidia's H100 GPU with Transformer Engine~\cite{transformerengine}, achieving impressive speedup for the training of large-scale transformers. 
Recently, the training numerical precision has been pushed down to 4 bits. Sun et al.~\cite{sun2020ultra} successfully trained several modern networks with INT4 activation/weights and FP4 gradients; and Chmiel et al.~\cite{chmiel2021logarithmic} propose a custom 4-bit logarithmic numerical format to further improve the accuracy. However, these 4-bit training methods cannot be directly  utilized for acceleration as they require custom numerical formats which are not supported on contemporary hardware.


There are significant optimization challenges to train neural networks at an extremely low 4-bit level. 
First, the non-differentiable quantizers in forward propagation make the loss landscape rugged, where gradient-based optimizers can easily stuck at local optima~\cite{liu2021adam}. Second, gradients are only computed approximately in low-precision. Such imprecise gradients slow down the training process and even cause the training to be unstable or diverge. 

In this work, we propose a novel INT4 training algorithm for a class of popular neural networks, transformers~\cite{vaswani2017attention}.
All the costly linear operations for training transformers can be written in a matrix multiplication (MM) form. This MM form allows us to design more flexible quantizers, which better approximate FP32 matrix multiplications by utilizing specific structures of the activations, weights, and gradients in transformers. 
Our quantizers leverage advances in the field of randomized numerical linear algebra (RandNLA)~\cite{drineas2016randnla}. 

For forward propagation, we find that outliers in the activation are the main reason for accuracy degradation. To suppress outliers, we propose a \emph{Hadamard quantizer}, which quantizes a \emph{transformed version} of the activation matrix. The transformation is a block diagonal Hadamard matrix, which spreads the information carried in outliers to its nearby entries of the matrix and thus reduces the numerical range of the outliers. 

For backpropagation, we exploit the \emph{structural sparsity} of activation gradients. We find that the gradients of a few tokens are extremely large. 
Meanwhile, the gradients for the rest majority of the tokens are very small, even smaller than the quantization residuals of larger gradients. Rather than computing these small gradients, it is better to save the computational resource for calculating the residuals of the larger gradients. To utilize such sparsity, we propose \emph{bit splitting}, which split the gradient of each token into higher 4 bits and lower 4 bits. Then, we choose the most informative gradients by \emph{leverage score sampling}, which is an importance sampling technique for RandNLA. 

Combining quantization techniques for forward and backward propagation, we propose an algorithm that uses INT4 MMs for all linear operations in transformers.
We evaluate our algorithm for training transformers on a wide variety of tasks, including natural language understanding, question answering, machine translation, and image classification.
Our algorithm achieves competitive or superior accuracy compared with existing works on 4-bit training~\cite{sun2020ultra,chmiel2021logarithmic}. Moreover, our algorithm \emph{is compatible with contemporary hardware} like GPUs, since it does not require custom numerical formats like FP4 or logarithm formats. 
Our prototypical quantization + INT4 MM operator implementation is up to 2.2 times faster than the FP16 MM baseline, and it speeds up the training by up to 35.1\%. 
\section{Related Work}\label{sec: related}
\paragraph{Fully Quantized Training}\label{subsec: Fully Quantized Training}
Fully quantized training (FQT)~\cite{micikevicius2018mixed,wang2018training,sun2019hybrid,banner2018scalable,drumond2018training,adelman2018faster,wu2018training,zhang2019adaptive,langroudi2019deep,langroudi2019cheetah,yang2020training,zhu2020towards} methods accelerate training by quantizing the activations, weights, and gradients  to low-precision, so linear and nonlinear operators during training can be implemented with low-precision arithmetic.  Researches on FQT design novel numerical formats and quantization algorithms which better approximate full-precision tensors. 
The current research frontier is 4-bit FQT. 
FQT is challenging due to the vast numerical range of the gradient and the optimization issues of training quantized networks from scratch.  Due to these challenges, existing 4-bit FQT algorithms~\cite{sun2020ultra,chmiel2021logarithmic} still have $\sim$1-2.5\% accuracy drop on several tasks, and they cannot support contemporary hardware.



\paragraph{Other Efficient Training Methods}\label{subsec: Other Efficient Training Methods}
Mixture-of-experts~\cite{shazeer2017outrageously} improves the model capacity without increasing the training budget. 
Structural dropout~\cite{huang2016deep,fan2019reducing} exploits computationally efficient ways to regularize the model.
Efficient attention~\cite{kitaev2019reformer,choromanski2020rethinking} reduces the quadratic time complexity for computing attention.
Distributed training systems~\cite{rajbhandari2020zero,huang2019gpipe} reduce training time by leveraging more computational resources. Our work  on reducing numerical precision  is orthogonal with these directions. 
\section{Forward Propagation}\label{sec: forward}
Neural network training is an iterative optimization procedure with stochastic gradients computed by forward and back propagation. We accelerate forward and back propagation with 4-bit integer (INT4) arithmetic. 
We first describe the forward propagation of our training procedure. 
The forward propagation  can be formulated as a composition of linear  and non-linear (GeLU, normalization, softmax, etc.) operators. 
In our training procedure, we accelerate all the linear operators with INT4 arithmetic and leave all the less-computationally-intensive non-linear operators in the 16-bit floating-point (FP16) format. 
All linear operations in transformers can be  written in a matrix multiplication (MM) form. For ease of presentation, we consider the acceleration of the following simple matrix multiplication throughout this paper:
\begin{align}\label{eqn: typical linear}
\Zv = \Xv \Wv^\top, \text{where } \Zv\in \Rb^{N\times C}, \Xv\in \Rb^{N\times D} \text{and } \Wv\in \Rb^{C\times D}.
\end{align}

The most predominant use case of such MM is the fully-connected layer. 
Consider a transformer with an input shape of  \emph{(batch size $S$, sequence length $T$, dimensionality $D$)}.
The fully-connected layer can be written as Eq.~(\ref{eqn: typical linear}) where $\Xv$ is the activation for $N=ST$ tokens, and $\Wv$ is the weight matrix. 
For attention layers, batch matrix multiplications (BMMs) might be required. 
Our proposed techniques can be applied to BMMs, and we leave the discussion of BMMs in Appendix.~\ref{appendix: subsec: BMM in Attention}.




\subsection{Learned Step Size Quantization}\label{subsec: Learned Step Size Quantization}
To accelerate training, the forward propagation must be computed with integer arithmetic. 
We leverage the \emph{learned step size quantizer} (LSQ)~\cite{esser2019learned} for this purpose. 
LSQ is a static quantization method whose quantization scale does not depend on the input, and is thus cheaper than dynamic quantization methods~\cite{jacob2018quantization}, which need to compute the quantization scale dynamically per iteration. 

Given a FP matrix $\Xv$, LSQ \emph{quantizes} $\Xv$ to integer with 
\begin{align}\label{eqn: lsq-forward}
\toint{s_X}{\Xv} := \round{\mbox{clamp}(\Xv/s_X, -Q_N, Q_P)},
\end{align}
where $s_X$ is a learnable scalar parameter, $\mbox{clamp}$ restricts its input to the range $[-Q_N, Q_P]$, $\round{\cdot}$ is a rounding operation, and $\Xv/s_X$ is computed elementwise. The resultant matrix takes values from $\{-Q_N, -Q_N+1, \dots, Q_P\}$. 
Since we aim to perform INT4 MMs, we set $Q_N=Q_P=7$.
The integer matrix can be \emph{dequantized} back to FP through 
$\tofloat{\toint{s_X}{\Xv}} = s_X \toint{s_X}{\Xv} \approx \Xv.$

With LSQ, Eq.~(\ref{eqn: typical linear}) can be computed approximately as 
$\Yv = \Xv \Wv^\top \approx s_X s_W\toint{s_X}{\Xv}\toint{s_W}{\Wv}^\top,$
where the INT4 MM $\toint{s_X}{\Xv}\toint{s_W}{\Wv}^\top$ can be implemented efficiently on hardware.

\paragraph{Remark: }

Quantization-aware training (QAT)~\cite{choi2018pact,Zhang_2018_ECCV,zhou2017incremental,jacob2018quantization,dong2019hawq,dong2019hawqv2,shen2019q,zafrir2019q8bert,shen2020QBERT,tang2022mkq,zhang2020ternarybert,bai2020binarybert,foret2020sharpness,wang2022squat} is an \emph{inference acceleration} technique which trains networks with quantizers inserted in the forward propagation graph, so the trained network can perform efficiently during inference. 
QAT can compress activation/weights to extremely low precision (e.g. 1-2 bits). 
It is tempting to think that directly applying a quantizer for QAT to FQT can lead to similar low activation/weights bit-width. However, even only quantizing the forward propagation for FQT is much more challenging than QAT because:  (1) QAT requires a converged full-precision model as initialization~\cite{esser2019learned} and/or as a teacher model for knowledge distillation~\cite{bai2020binarybert}; (2) QAT can adopt expensive multi-stage training pipelines without worrying about the convergence speed~\cite{liu2020reactnet}, while FQT algorithm must converge as fast as full-precision training algorithms to be useful; (3) QAT may approximate the discrete quantizer with continuous functions during training~\cite{gong2019differentiable}, which cannot be implemented with integer arithmetic. Due to these challenges, it is still an open problem to do FQT with 4-bit activations/weights. 


\subsection{Activation Outliers}\label{subsec: Challenges: Outlier and Cold Start}

Simply applying LSQ for FQT with 4-bit activation/weights leads to accuracy degradation due to \emph{activation outliers}~\cite{xiao2022smoothquant}. As shown in Fig.~\ref{subfig: log original distribution}, activations have some outlier entries, which are much larger in magnitude than other entries. In this case, the step size $s_X$ poses a trade-off between quantization granularity and representable numerical range. If $s_X$ is large, we can represent the outliers well at the expense of representing most other entries in a very coarse manner. On the other hand, if $s_X$ is small, we have to truncate the entries outside the range $[-Q_N s_X, Q_P s_X]$. 
Unfortunately, the transformers tend to store information in these outliers, and such truncation would seriously harm accuracy (see Sec.~\ref{subsec: ablation} for details). 
The outlier problem is particularly significant when the training task is to fine-tune a pre-trained model on some new downstream tasks, since the pre-train model contains more outliers~\cite{xiao2022smoothquant} than random initialization.

There exists some works to handle activation outliers for post-training quantization (PTQ). 
Outlier Suppression~\cite{wei2022outliersuppression} discover that LayerNorms amplify outliers, and propose Gamma Migration and Token-Wise Clipping to solve this issue and achieves 6-bit BERT PTQ without too much degradation. SmoothQuant~\cite{xiao2022smoothquant} migrates the quantization difficulty of activation outliers to weights and achieves 8-bit PTQ for large language models, such as OPT-175B. Outlier Channel Splitting~\cite{zhao2019outlierchannelsplitting} duplicates channels containing outliers with small overhead on the size of the network. However, these methods mainly focus on PTQ or QAT, and seldom successfully deal with ultra-low 4-bit training.


\subsection{Hadamard Quantization}\label{subsec: Hadamard Quantization}

We propose a \emph{Hadamard quantizer} (HQ) to solve the outlier problem. 
Its main idea is to quantize the matrices \emph{in another linear space} which has fewer outliers. 

The outliers in activation matrices form a feature-wise structure~\cite{xiao2022smoothquant}. They are typically concentrated on a few dimensions, i.e., only a few columns of $\Xv$ are significantly larger than others. Hadamard transform~\cite{sylvester1867lhadamard} is a linear transformation, which can amortize the outliers into other entries. 
Specifically, the Hadamard transform $\Hv_k$ is a $2^k\times 2^k$ matrix, where
\[ \Hv_0 = 
\begin{bmatrix}
1
\end{bmatrix},~~~~ 
{\Hv}_{k} = \tfrac{1}{\sqrt 2}
\begin{bmatrix}
{\Hv}_{k-1} \quad {\Hv}_{k-1} ;
{\Hv}_{k-1} \quad -{\Hv}_{k-1} 
\end{bmatrix}.\] 
Hadamard matrices are orthogonal and symmetric: $\Hv_k=\Hv_k^\top=\Hv_k^{-1}$, so $\Hv_k\Hv_k=\Iv, \forall k\ge 0$. Consider any coordinate row vector\footnote{A vector which $i$-th dimension is 1, and all other dimensions are 0.} $\ev_i^\top\in \Rb^{2^k}$, we have $\ev_i^\top \Hv_k=2^{-k/2}\mathbf{1}_{2^k}, \forall i$, where $\mathbf{1}_{2^k}=(1, 1, \dots, 1)$ is a $2^k$-dimensional all-one-vector. This demonstrates  the extreme case when a single outlier dominates all the rest dimensions. In this case, Hadamard transformation effectively turns the vector into a quantization-friendly all-one-vector. The practical effect of the Hadamard transform on suppressing activation outliers is demonstrated in Fig.~\ref{subfig: log hadamard distribution}. 

HQ uses a block-diagonal transformation matrix $\Hv\in \Rb^{D\times D}$:
$ \Hv = \mbox{BlockDiag}(\Hv_k, \dots, \Hv_k), $
where $D$ is a multiple of $2^k$. 
To suppress outliers, we quantize a transformed version of $\Xv$ and $\Wv$:
\[
\Xv=(\Xv \Hv)\Hv^\top\approx 
s_{X}\toint{s_{X}}{\Xv \Hv}\Hv^\top,~~~~
\Wv = (\Wv \Hv)\Hv^\top \approx 
s_{W}\toint{s_{W}}{\Wv \Hv}\Hv^\top.\]

Combining the quantized matrices, we get
{\small
\begin{align}
    \Yv &= \Xv\Wv^\top \approx s_{X}s_{W}\toint{s_{X}}{\Xv \Hv}\Hv^\top\Hv\toint{s_{W}}{\Hv^\top\Wv^\top}
    &= s_{X}s_{W}\toint{s_{X}}{\Xv \Hv}\toint{s_{W}}{\Hv^\top\Wv^\top}, \label{eqn: lhq}
\end{align}
}
where the inverse transformations cancel with each other, and the MM can be implemented as:\\
\fbox{
 \parbox{0.95\linewidth}{
{\bf Procedure \texttt{HQ-MM}}  
\begin{enumerate}[noitemsep,nolistsep,leftmargin=*,topsep=0pt]
  \item Compute $\Xv\Hv$ and $\Hv^\top\Wv^\top$ in FP16. 
  \item Quantize the resultant matrices to INT4 by LSQ.
  \item Multiply the two INT4 matrices.
  \item Dequantize the resultant INT32 matrix to FP16 by multiplying $s_Xs_W$.
\end{enumerate}
}
}
For time complexity, 
Step 1 takes $O(2^kN(D+C))$ FP16 multiply-accumulates (MACs); Step 2 and Step 4 takes $O(N(D+C))$ FP16 MACs in total; and Step 3 takes $O(NDC)$ INT4 MACs. Comparing with the plain LSQ Eq.~(\ref{eqn: lsq-forward}), the amount of FP16 MACs increases by $2^k$ times, from $O(N(D+C))$ to $O(2^kN(D+C))$. However, our \textbf{\texttt{HQ-MM}} is still much cheaper than an FP16 MM given $2^k\ll D$ and $2^k\ll C$. 
The number $k$ shows a tradeoff between the ability to suppress outliers and computation complexity. Larger $k$ allows for amortizing the outlier within a larger horizon, at the cost of being more expensive. We propose an adaptive algorithm to choose $k$ for each activation depending on the outlier scale, as discussed in Appendix~\ref{appendix: subsec: Choose_hadamard_matrix_size}. The typical value is $k=5$, while the dimensionality $C$ and $D$ ranges from 768 to 4096.









\begin{figure}[t]

    \begin{minipage}{0.48\textwidth}
        \subfigure{\label{subfig: log original distribution}\includegraphics[width=0.48\linewidth]{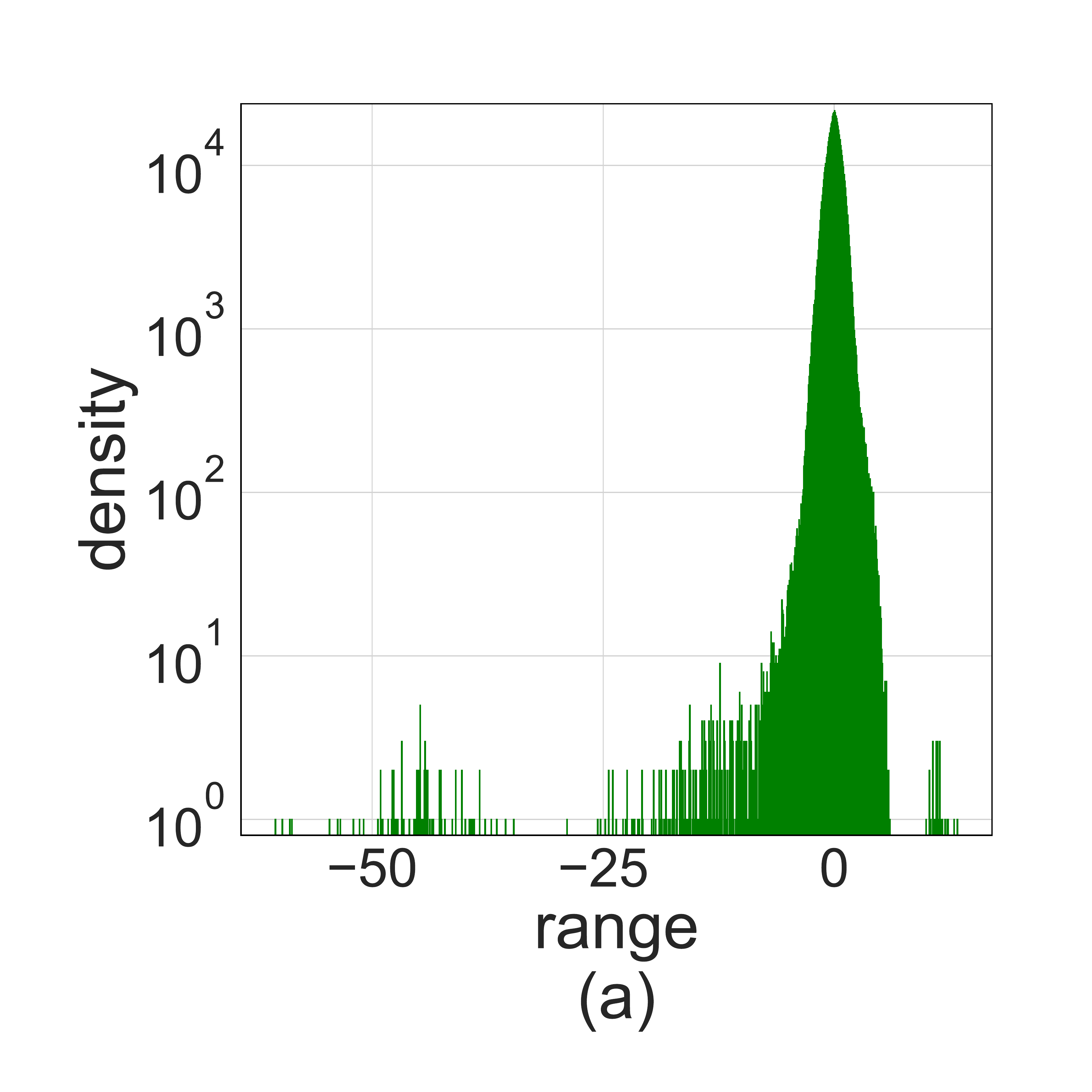}}\hfill
        \subfigure{\label{subfig: log hadamard distribution}\includegraphics[width=0.48\linewidth]{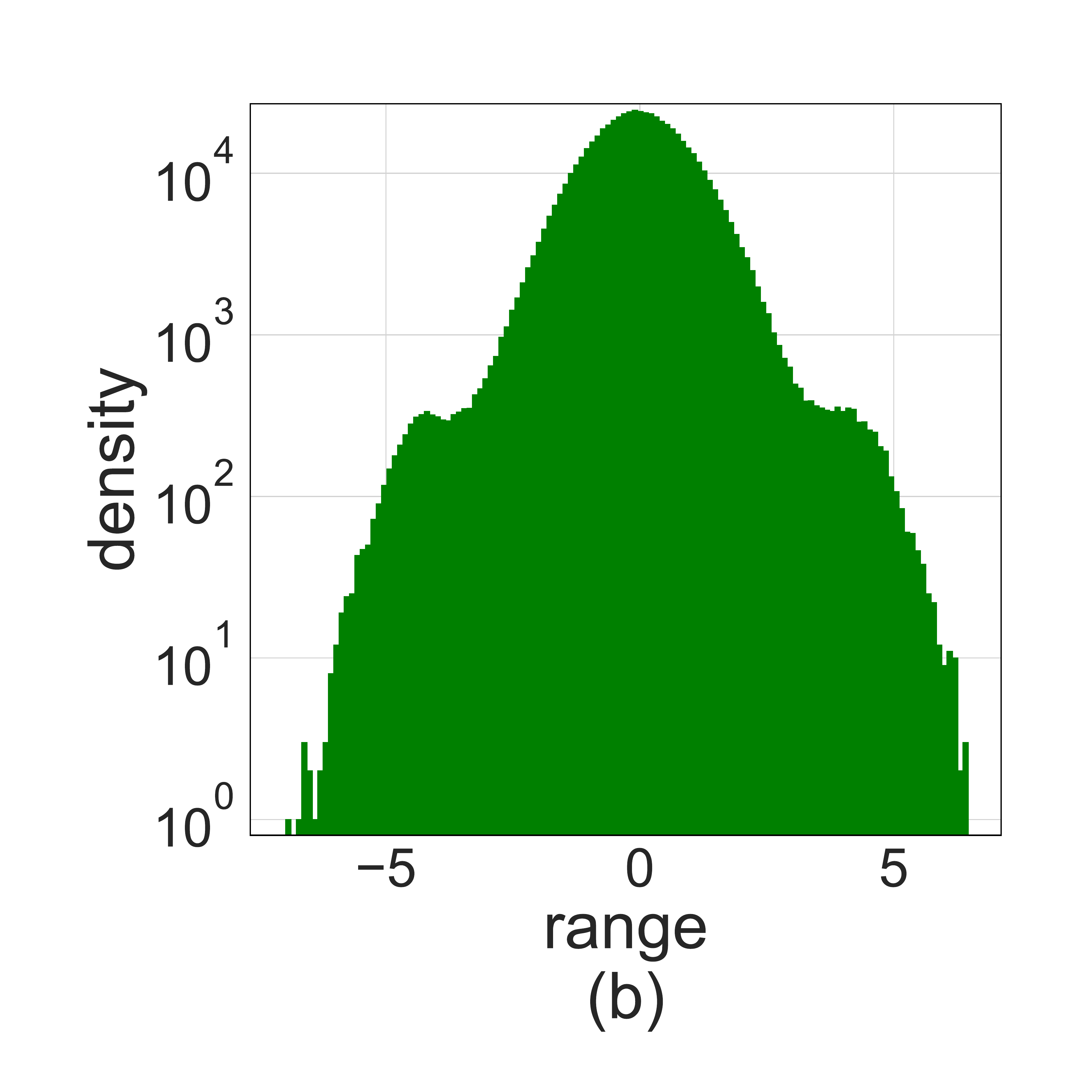}}
         \caption{Histogram of activation of the \texttt{linear-1-2} layer in a BERT-base-uncased model. (a) Original activation distribution; (b) Hadamard-transformed activation distribution.}
    \end{minipage}\hfill
    \begin{minipage}{0.48\textwidth}
    \centering
    \subfigure{\label{fig: gradient norm distribution}{\includegraphics[width=0.48\linewidth]{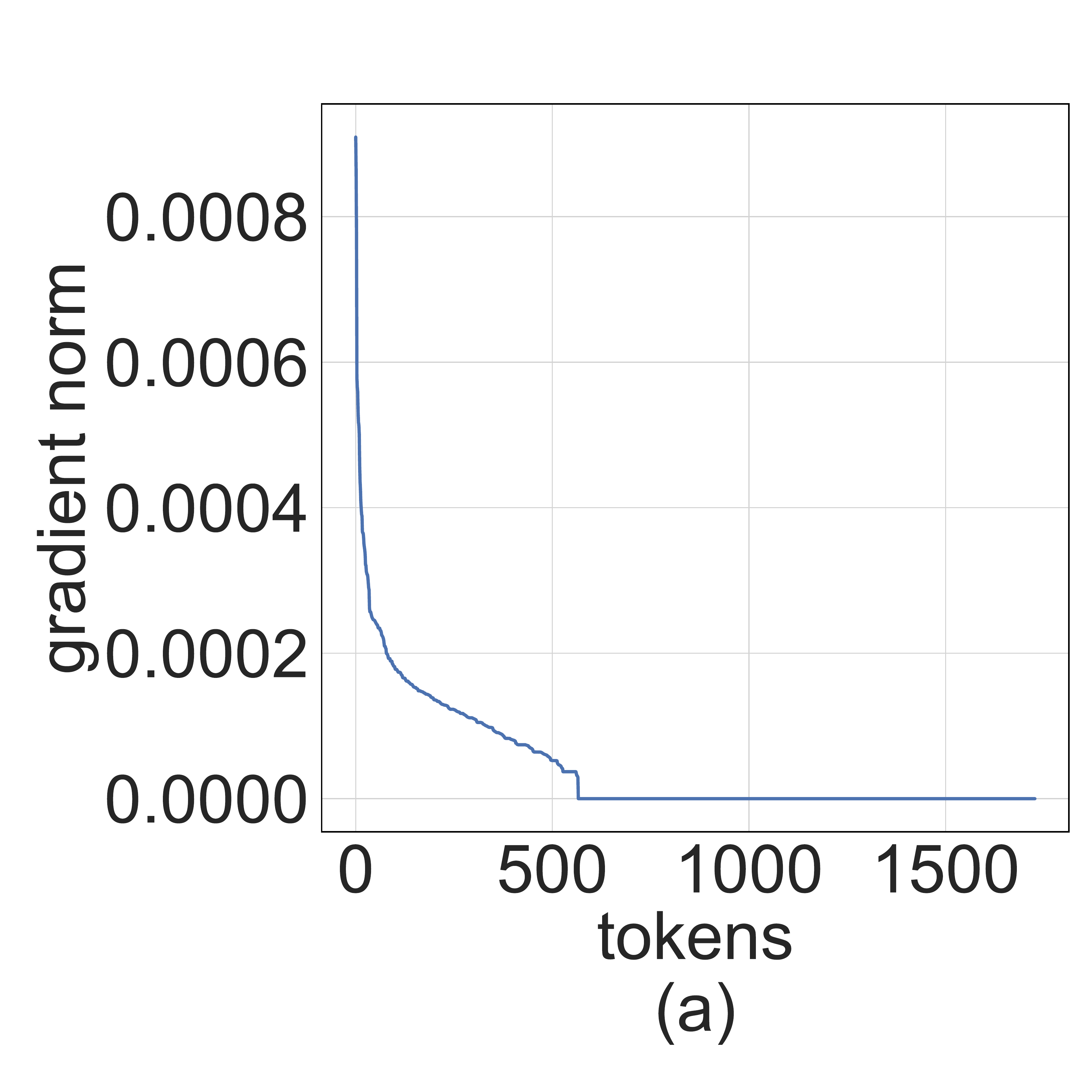}}}\hfill
    \subfigure{\label{fig: gradient norm cumulation}{\includegraphics[width=0.48\linewidth]{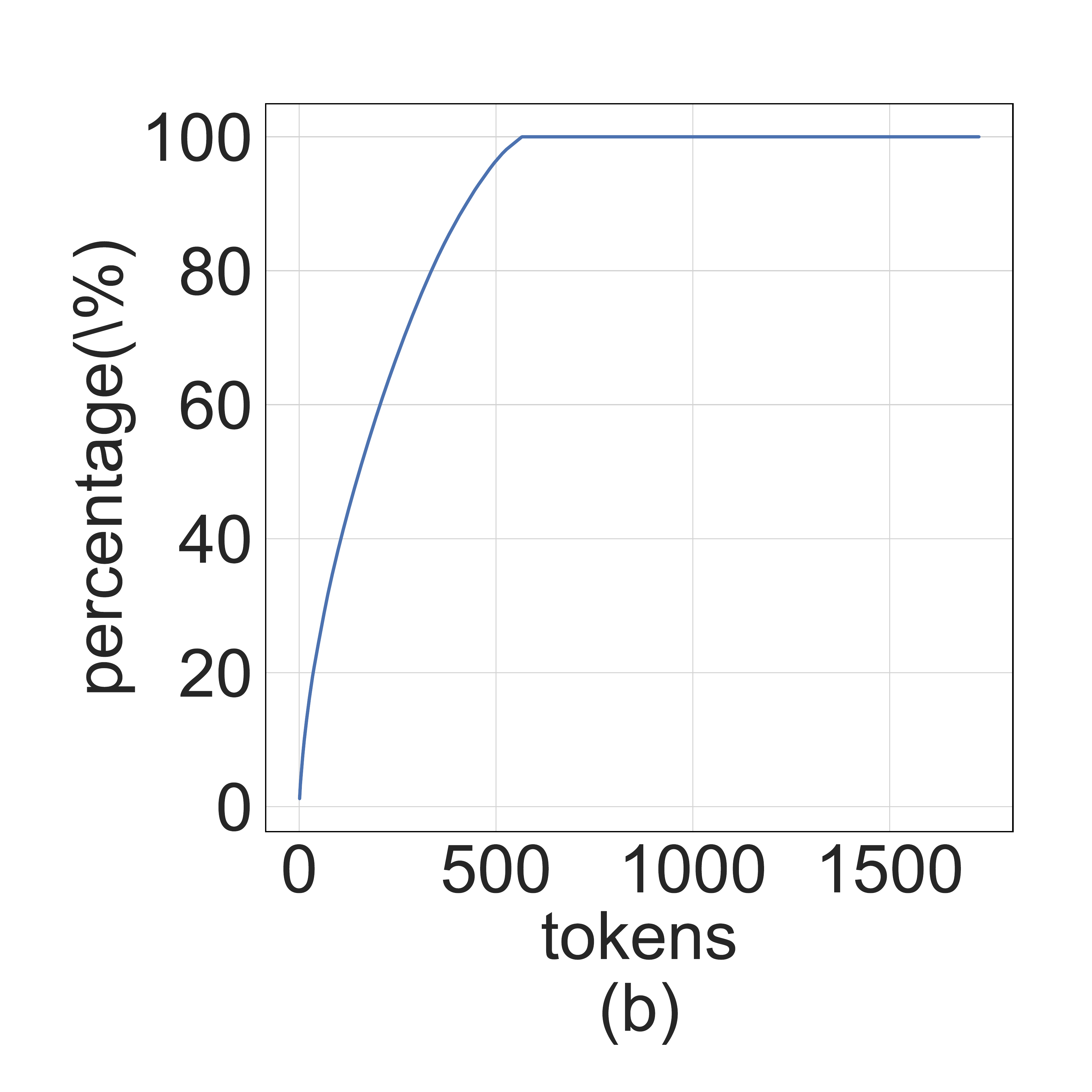}}}
    \caption{(a) The distribution of gradient norm along the token dimension. (b) The cumulative sum of the top X values as a percentage of the sum of all norms along the token dimension.}
    \label{fig: gradient norm}
    \end{minipage}
    \vspace{-0.2cm}
\end{figure}

\section{Backpropagation}\label{sec: backward}


We now consider accelerating the backpropagation of the linear layer  with INT4 operations. 
The linear operator \texttt{\textbf{HQ-MM}} defined in Eq.~(\ref{eqn: lhq}) has four inputs: activation $\Xv$, weight $\Wv$,  and step sizes $s_X$, $s_W$. 
Given the output gradient $\nabla_{\Yv}\Lc$ w.r.t. some loss function $\Lc$, we need to compute the gradient of all four inputs. We discuss the computation of   activation/weight gradients in this section, and left the discussion of step size gradients to Appendix ~\ref{appendix: subsec: Learning Quantizer Parameters}. For simplicity, we omit $\Lc$ and simply use $\nabla_{\Yv}$ to denote the gradient in the following text. 

By the straight-through estimator $\round{x}^\prime = 1$~\cite{bengio2013estimating} and the chain rule, we have
\begin{align}
\nabla_{\Wv}=s_X\left(\nabla_{\Yv}^\top \hat\Xv \circ \Ib_W\right) \Hv^\top,~~~~
\nabla_{\Xv}=s_W\Ib_X\circ\nabla_{\Yv}\hat\Wv \Hv^\top,
\end{align}
where we define $\hat \Xv = \toint{s_X}{\Xv\Hv}$, 
$\hat \Wv = \toint{s_W}{\Wv\Hv}$, $\Ib_X = \Ib(-Q_N\le \Xv/s_X\le Q_P)$, and $\Ib_W = \Ib(-Q_N\le \Wv/s_W\le Q_P)$. For computing the gradients, three types of matrix multiplications are required:
\begin{myitems}
    \item The element-wise multiplication $\circ$ of a $0/1$ matrix $\Ib_X$ (or $\Ib_W$) with another INT4 (or INT32) matrix. This operation has low time complexity.
    \item The multiplication of an INT32 matrix with an FP16 block-wise Hadamard matrix $s_W\Hv^\top$, which also has low-time complexity, as discussed in Sec.~\ref{subsec: Hadamard Quantization}.
    \item The multiplication of the FP16 gradient $\nabla_{\Yv}$ with an INT4 matrix $\hat \Xv$ or $\hat \Wv$, which we will accelerate by quantizing $\nabla_{\Yv}$ to INT4. 
\end{myitems}
In the rest of this section, we will discuss quantization methods to compute the ``type 3'' MMs $\nabla_{\Yv}^\top\hat\Xv$ and $\nabla_{\Yv}\hat\Wv$. 
We quantize $\nabla_{\Yv}$ dynamically for each MM, while $\hat{\Xv}$ and $\hat{\Wv}$ have been already calculated in forward propagation in Section.~\ref{sec: forward}. We start by discussing the structure of the gradient.

\subsection{Structural Sparsity of Gradients}\label{subsec: Structural Sparsity of Gradients}

We note that the gradient matrix $\nabla_{\Yv}$ tends to be very sparse along the training process. Furthermore, the sparsity has a structure: few rows (i.e., tokens) of $\nabla_{\Yv}$ have large entries, while most other rows are close to an all-zero vector. We illustrate this by plotting the histogram of per-row norm $\norm{(\nabla_{\Yv})_{i,:}}$ for all the rows $i$ in Fig.~\ref{fig: gradient norm}.

Such a structural sparsity arises from the heavy overparameterization~\cite{zhang2021understanding} of modern neural networks. 
During almost the entire training process, the network operates in the overparameterized scheme~\cite{nakkiran2021deep}, where it can fit most training data well, except for a few hard examples. Therefore, the (activation) gradient will be close to zero for well-fitted data points. We find that for pretraining tasks, such structural sparsity quickly emerges after only a few training epochs. For fine-tuning tasks, the gradient is always sparse during the whole training process.


\subsection{Bit Splitting and Leverage Score Sampling}\label{subsec: Bit Splitting and Leverage Score Sampling}
Here, we discuss how to design gradient quantizers to accurately compute the MMs during backpropagation by leveraging structural sparsity. The high-level idea is that many rows of the gradient are so small that they have little impact on the parameter gradient, yet they waste abundant computation. On the other hand, the large rows cannot be accurately represented with INT4. We drop some small rows and use the saved computation to represent large rows more accurately. 






First, we propose \emph{bit splitting} (BS), which splits a full-precision matrix as higher and lower 4 bits: 
\begin{align}\label{eqn: bs}
\nabla_{\Yv} \approx s_\uparrow \nabla_{\Yv}^\uparrow + s_\downarrow \nabla_{\Yv}^\downarrow,
\end{align}
where $s_\uparrow, s_\downarrow$ are two floating-point scalars, and $\nabla_{\Yv}^\uparrow$, $\nabla_{\Yv}^\downarrow$ are INT4 matrices representing the higher and lower 4 bits, respectively. BS can be implemented by first quantizing $\nabla_{\Yv}$ to INT4 as $\nabla_{\Yv}\approx s_\uparrow \nabla_{\Yv}^\uparrow$ and then quantize the residual to INT4 as $\nabla_{\Yv}-s_\uparrow \nabla_{\Yv}^\uparrow\approx s_\downarrow \nabla_{\Yv}^\downarrow$.
BS can be viewed as an INT8 representation of a matrix, where $\nabla_{\Yv}^\uparrow$ and $\nabla_{\Yv}^\downarrow$ are the higher and lower 4 bits of the INT8 representation. Next, we discuss how to compute the weight and activation gradient. 

\paragraph{Weight Gradient}\label{subsec: Weight Gradient}
As discussed earlier, weight gradient involves the matrix multiplication $\nabla_{\Yv}^\top\hat\Xv$, where $\nabla_{\Yv}\in \Rv^{N\times C}$ and $\hat\Xv$ is an $N\times D$ INT4 matrix. 
By Eq.~(\ref{eqn: bs}):
\begin{align}\label{eqn: weight-grad}
    \nabla_{\Yv}^\top\hat\Xv  \approx (s_\uparrow {\nabla_{\Yv}^{\uparrow}}^\top + s_\downarrow {\nabla_{\Yv}^{\downarrow}}^\top)\hat\Xv 
={\nabla_{\Yv}^{\updownarrow}}^\top\Xv^\updownarrow, 
\end{align}
where we define $\nabla_{\Yv}^{\updownarrow}=[
s_\uparrow {\nabla_{\Yv}^{\uparrow}};  s_\downarrow {\nabla_{\Yv}^{\downarrow}}
]^\top\in \Rb^{2N \times C}$ and $\hat\Xv^\updownarrow = [\hat\Xv; \hat\Xv]$ to be a $2N\times D$ INT4 matrix. Eq.~(\ref{eqn: weight-grad}) represents the product of an INT8 $\nabla_{\Yv}^\top$ and an INT4 $\hat\Wv$, and can be implemented by two INT4 MMs ${\nabla_{\Yv}^{\uparrow}}^\top\hat\Xv$ and ${\nabla_{\Yv}^{\downarrow}}^\top\hat\Xv$. 
Such MM is rather accurate since $\nabla_{\Yv}$ is represented with 8 bits.

However, comparing to a na\"{i}ve quantization of $\nabla_{\Yv}$ to INT4, BS doubles the amount of INT4 operations for MM. We propose \emph{leverage score sampling} (LSS) to cut the operations of Eq.~(\ref{eqn: bs}) by half, to the same amount as the na\"{i}ve MM $s_\uparrow {\nabla_{\Yv}^{\uparrow}} \hat\Xv$.
Noticing that the MM Eq.~(\ref{eqn: weight-grad}) can be written as the sum of $2N$ rank-1 matrices:
\begin{align}\label{eqn: weight-grad-leverage}
{\nabla_{\Yv}^{\updownarrow}}^\top\Xv^\updownarrow = 
    \sum_{i=1}^{2N} {{\nabla_{\Yv}^{\updownarrow}}_{:, i}^\top}\Xv_{i}^\updownarrow 
    = \sum_{i=1}^{2N} \nabla_{\Wv_i},
\end{align}
\vspace{-0.3cm}

where $\nabla_{\Wv_i}={{\nabla_{\Yv}^{\updownarrow}}_{:, i}^\top}\Xv_{i}^\updownarrow $. 
Due to the sparsity of $\nabla_{\Yv}$, the matrices $\nabla_{\Wv_i}$ differ in magnitude and small matrices can be discarded without having a big influence on the result.

Our proposed LSS assigns each $\nabla_{\Wv_i}$ a probability $p_i\in [0, 1], i=1, \cdots, 2N$, that satisfies $\sum_{i=1}^{2N} p_i = N$. 
We define random masks $m_i\sim \Bern(p_i)$ and mask matrix $\tilde \Mv$, and approximate it as
\begin{align*}
{\nabla_{\Yv}^{\updownarrow}}^\top\Xv^\updownarrow\approx 
{\nabla_{\Yv}^{\updownarrow}}^\top
\tilde \Mv
\Xv^\updownarrow=\sum_{i=1}^{2N} \frac{m_i}{p_i}  {\nabla_{\Yv}^{\updownarrow}}_{:,i}^\top\Xv_{i}^\updownarrow, \text{where } \tilde \Mv = \diag\left(\tfrac{m_1}{p_1}, \dots, \tfrac{m_{2N}}{p_{2N}}\right),
\end{align*}
which is an unbiased approximation since 
$
\E{{\nabla_{\Yv}^{\updownarrow}}^\top
\tilde \Mv
\Xv^\updownarrow}
={\nabla_{\Yv}^{\updownarrow}}^\top
 \E{\tilde\Mv}
\Xv^\updownarrow = {\nabla_{\Yv}^{\updownarrow}}^\top
\Xv^\updownarrow.$

In expectation, there are only $N$ nonzero $m_i$s. Therefore, LSS reduces the cost of MM by half. 
For LSS to be accurate, we minimize its variance. We have:
\begin{proposition} (LSS variance for weight gradient)\label{Proposition: weight}
\begin{align*}
\Var{\sum_{i=1}^{2N} \frac{m_i}{p_i}  {\nabla_{\Yv}^{\updownarrow}}_{:,i}^\top\Xv_{i}^\updownarrow}
= \sum_{i=1}^{2N} \frac{1-p_i}{p_i} \lVert {{\nabla_{\Yv}^{\updownarrow}}_{i,:}}\rVert^2\lVert{{\Xv}_{i, :}^\updownarrow}\rVert^2, \mbox{where } \Var{\Xv}:=\E{\norm{\Xv-\Eb\Xv}}_F^2.
\end{align*}
\end{proposition}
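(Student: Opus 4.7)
The identity is a direct second-moment calculation that reduces to the scalar variance of a Bernoulli random variable. Writing $\nabla_{\Wv_i}:=({\nabla_{\Yv}^{\updownarrow}})_{i,:}^\top(\Xv^{\updownarrow})_{i,:}$ for the $i$-th rank-one summand, the sampling error is $\sum_{i=1}^{2N}\bigl(\tfrac{m_i}{p_i}-1\bigr)\nabla_{\Wv_i}$, so the object to compute is $\E{\norm{\sum_{i=1}^{2N}(\tfrac{m_i}{p_i}-1)\nabla_{\Wv_i}}_F^2}$.

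First, I would expand the squared Frobenius norm entrywise and push the expectation inside the double sum over $i,j$. Because the masks $\{m_i\}$ are drawn independently, all cross-terms satisfy $\E{(\tfrac{m_i}{p_i}-1)(\tfrac{m_j}{p_j}-1)}=0$ for $i\ne j$, collapsing the double sum to the diagonal $\sum_{i=1}^{2N}\Var{\tfrac{m_i}{p_i}}\,\norm{\nabla_{\Wv_i}}_F^2$. A one-line Bernoulli computation gives $\Var{m_i/p_i}=(1-p_i)/p_i$.

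Next, I would factor $\norm{\nabla_{\Wv_i}}_F^2$ using the outer-product identity $\norm{u v^\top}_F^2=\norm{u}^2\norm{v}^2$, which itself is immediate from $\mathrm{tr}(v u^\top u v^\top)=\norm{u}^2\norm{v}^2$. Applying it to the rank-one matrix $\nabla_{\Wv_i}$ yields $\norm{({\nabla_{\Yv}^{\updownarrow}})_{i,:}}^2\,\norm{(\Xv^{\updownarrow})_{i,:}}^2$, and summing over $i$ reproduces the claimed expression.

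The main obstacle is essentially none: the argument is bookkeeping around three elementary facts (independence gives additive variance for a linear combination of random matrices, Bernoulli variance is $p(1-p)$, and the Frobenius norm of an outer product factors). The only mild subtlety is notational: the proposition writes the $i$-th summand as $({\nabla_{\Yv}^{\updownarrow}})_{:,i}^\top(\Xv)_i^{\updownarrow}$, and one must verify that the $i$-th column of $({\nabla_{\Yv}^{\updownarrow}})^\top$ equals the $i$-th row of $\nabla_{\Yv}^{\updownarrow}$, so that the outer-product decomposition is consistent with ${\nabla_{\Yv}^{\updownarrow}}^\top\Xv^{\updownarrow}=\sum_{i=1}^{2N}({\nabla_{\Yv}^{\updownarrow}})_{i,:}^\top(\Xv^{\updownarrow})_{i,:}$ used in the Frobenius-norm factoring step.
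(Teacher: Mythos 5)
Your proposal is correct and follows essentially the same route as the paper's proof: independence of the masks reduces the variance to a diagonal sum, the Bernoulli variance of $m_i/p_i$ gives the factor $(1-p_i)/p_i$, and the squared Frobenius norm of each rank-one term factors as $\lVert({\nabla_{\Yv}^{\updownarrow}})_{i,:}\rVert^2\lVert(\Xv^{\updownarrow})_{i,:}\rVert^2$. The paper merely writes the last step as an explicit double sum over entries $\sum_{j}\sum_{k}({\nabla_{\Yv}^{\updownarrow}}_{j,i})^2(\Xv^{\updownarrow}_{i,k})^2$ rather than invoking the outer-product identity, which is the same computation.
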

\vspace{-0.2cm}
The coefficient $c_i:= \lVert {{\nabla_{\Yv}^{\updownarrow}}_{i,:}}\rVert\lVert{{\Xv}_{i, :}^\updownarrow}\rVert$ is called the \emph{leverage score}, which can be easily computed in low time complexity. 
When $p_i \propto c_i$, the variance attends its minimum due to Cauchy inequality:
{\small
\begin{align*}
     \sum_{i=1}^{2N} \frac{1}{p_i} \lVert {{\nabla_{\Yv}^{\updownarrow}}_{i,:}}\rVert^2\lVert{{\Xv}_{i, :}^\updownarrow}\rVert^2 
    = \sum_{i=1}^{2N} \frac{c_i^2}{p_i} 
    = \sum_{i=1}^{2N} \frac{c_i^2}{p_i} \sum_{i=1}^{2N} p_i \geq (\sum_{i=1}^{2N} c_i)^2,
    \end{align*}
}
where the equality holds when $p_i \propto c_i.$ Intuitively, LSS can approximate the MM Eq.~(\ref{eqn: weight-grad-leverage}) well with significantly lower computational cost when the leverage scores $\{c_i\}$ are diverse, which is indeed the case as shown in Fig.~\ref{fig: gradient norm}.


Define $\Mv^\uparrow$ to be the top-left $N\times N$ submatrix of $\tilde \Mv$ and 
$\Mv^\downarrow$ to be the bottom-right one, we have
\begin{align*}
{\nabla_{\Yv}^{\updownarrow}}^\top
\tilde \Mv
\Xv^\updownarrow=
s_\uparrow{\nabla^\uparrow_{\Yv}}^\top \tilde \Mv^\uparrow \hat \Xv
+
s_\downarrow{\nabla^\downarrow_{\Yv}}^\top \tilde \Mv^\downarrow \hat \Xv,
\end{align*}
which can be implemented by two INT4 MMs with sampled rows/columns. Putting everything together, we propose the following MM procedure to compute the weight gradient:
\fbox{
 \parbox{0.95\linewidth}{
{\bf Procedure \lssmm}  
\begin{enumerate}[noitemsep,nolistsep,leftmargin=*,topsep=0pt]
  \item Quantize $\nabla_{\Yv}$ with  BS to obtain $\nabla_{\Yv}^\uparrow$ and  $\nabla_{\Yv}^\downarrow$ in INT4. 
  \item Compute the leverage score $\lVert {{\nabla_{\Yv}^{\updownarrow}}_{i,:}}\rVert\lVert{{\Xv}_{i, :}^\updownarrow}\rVert$ in FP16. 
  \item Sample the masks $\{m_i\}$.
  \item Sample rows of $\nabla_{\Yv}$ and $\hat\Xv$ given the masks $\{m_i\}$.
  \item Compute INT4 MMs  ${\nabla^\uparrow_{\Yv}}^\top \tilde \Mv^\uparrow \hat \Xv$ and ${\nabla^\downarrow_{\Yv}}^\top \tilde \Mv^\downarrow \hat \Xv,$
  \item Dequantize and sum up the resultant INT32 matrices to obtain the FP16 result ${\nabla_{\Yv}^{\updownarrow}}^\top
\tilde \Mv
\Xv^\updownarrow$.
\end{enumerate}
}
}
As $\tilde \Mv$ only has $N$ non-zero elements in expectation, 
the two matrix multiplications in Step 5 take about $2NCD$ INT4 MACs, which aligns with the cost of the na\"{i}ve MM $s_\uparrow {\nabla_{\Yv}^{\uparrow}} \hat\Xv$. The overhead of all the other steps is $O(NC+ND)$ in total.

\paragraph{Activation Gradient}\label{subsec: Activation Gradient}

Similar to the previous discussion, the gradient of input can be written as
\vspace{-0.1cm}
\begin{align}
    \nabla_{\Yv}\hat\Wv &\approx (s_\uparrow {\nabla_{\Yv}^{\uparrow}} + s_\downarrow {\nabla_{\Yv}^{\downarrow}})\hat\Wv = s_\uparrow {\nabla_{\Yv}^{\uparrow}} \hat\Wv + s_\downarrow {\nabla_{\Yv}^{\downarrow}}\hat\Wv 
    =
    \left(
    \hat{\Iv}^{\updownarrow}
    \nabla_{\Yv}^{\updownarrow}\right)
    \hat\Wv, \label{eqn: activation-grad 2}
\end{align}
\vspace{-0.5cm}

where we define $\nabla_{\Yv}^{\updownarrow}=[
s_\uparrow {\nabla_{\Yv}^{\uparrow}};  s_\downarrow {\nabla_{\Yv}^{\downarrow}}
]\in \Rb^{2N \times C}$ and $\hat\Iv^\updownarrow = \begin{bmatrix} \Iv & \Iv \end{bmatrix}$ to be a $N\times 2N$ INT4 matrix, $\Iv$ is a $N\times N$ identity matrix. The original product can also be implemented by two INT4 MMs ${\nabla_{\Yv}^{\uparrow}}\hat\Wv$ and ${\nabla_{\Yv}^{\downarrow}}\hat\Wv.$ But different from weight gradients, we now focus on ${\hat{\Iv}^{\updownarrow}\nabla_{\Yv}^{\updownarrow}}$ in Eq.~(\ref{eqn: activation-grad 2}) and do leverage score sampling on this MM. A detailed discussion can be found in Appendix ~\ref{appendix: subsec: Proof of Proposition activation}, and we only present the leverage score here.
Similarly, we write the MM as the sum of $2N$ smaller multiplications:
\vspace{-0.4cm}
\begin{align*}
{\hat{\Iv}^{\updownarrow}\nabla_{\Yv}^{\updownarrow}} = 
    \sum_{i=1}^{2N} \hat{\Iv}^{\updownarrow}_{:, i} {\nabla_{{\Yv}}^{\updownarrow}}{}_i
    \approx \frac{m_i}{p_i}\sum_{i=1}^{2N} \nabla_{{\Yv}_i},
\end{align*}
\vspace{-0.5cm}

where we define $\nabla_{{\Yv}_i} = \hat{\Iv}^{\updownarrow}_{:, i}{{\nabla_{\Yv}^{\updownarrow}}{}_i}$ and associate the probability $p_i$ and Bernoulli mask $m_i\sim \mbox{Bern}(p_i)$ with the $i$ multiplication. The leverage score for activation gradient is $c_i:= \lVert{\nabla_{{\Yv}}^{\updownarrow}}{}_i\rVert,$ and the variance attains minimum  when $p_i \propto c_i$. More details about the algorithm can be found at Appendix.~\ref{appendix: subsec: Learning Quantizer Parameters}
On the implementation side, once the mask $\{m_i\}$ is known, we can decompose the MM Eq.~(\ref{eqn: activation-grad 2}) as two INT4 MMs: $\left(
\hat{\Iv}^{\updownarrow}\tilde \Mv 
\nabla_{\Yv}^{\updownarrow}\right)
\hat\Wv=s_\uparrow\tilde \Mv^\uparrow 
 {\nabla_{\Yv}^{\uparrow}} \hat\Wv+s_\downarrow\tilde \Mv^\downarrow 
 {\nabla_{\Yv}^{\downarrow}} \hat\Wv$.

\begin{table}[t]

    \caption{Results on language model fine-tuning, transformer pretraining, and vision transformers fine-tuning and pretraining. Standard deviation is reported as subscript. FT refers to Fine-tuning, and PT refers to Pre-training. For WMT the result of 25.4 is result of Ultra-Low, not INT8.}
    \label{table: experiments big table}
    \centering
    \begin{footnotesize}
    \begin{sc}
    \resizebox{0.98\linewidth}{!}{
    \begin{tabular}{cccccccc}
    \toprule
        ~ & ~ & ~ & ~ & \multicolumn{2}{c}{Baseslines} & \multicolumn{2}{c}{4-bit training methods}                   \\
        \cmidrule(r){5-6} \cmidrule(r){7-8}
        Dataset & Train type & Model & Metric name & FP & INT8 & LSQ+LUQ & HQ+LSS \\ 
        \midrule
        
        \multirow{2}{*}{GLUE-dev} & \multirow{2}{*}{FT} & Bert-base & Avg & $82.67_{0.24}$ & $81.45_{0.13}$ & $75.29_{0.52}$ & \boldsymbol{$80.81_{0.31}$} \\ 
         ~ & ~ & Bert-large & Avg & $84.57_{0.42}$ & $82.74_{0.24}$ & $55.93_{2.47}$ & \boldsymbol{$82.25_{0.58}$} \\ 
        
        \midrule
        
        SQUAD v1 & FT & Bert-base & F1 & $88.32_{0.30}$ & $88.42_{0.20}$ & $85.75_{0.31}$ & \boldsymbol{$87.60_{0.25}$} \\ 

        \midrule
        
        SQUAD v2 & FT & Bert-base & F1 &$76.04_{0.68}$ & $75.63_{0.07}$ & $71.02_{0.41}$ & \boldsymbol{$74.63_{0.18}$} \\ 

        \midrule
        
        Adversarial QA & FT & Bert-base & F1 & $40.99_{0.38}$ & $40.17_{0.58}$ & $31.85_{0.30}$ & \boldsymbol{$38.70_{0.77}$} \\ 

        \midrule
        
        SWAG & FT & Bert-base & Acc & $79.84_{0.10}$ & $79.18_{0.19}$ & $70.79_{1.20}$ & \boldsymbol{$77.49_{0.16}$} \\ 
        
        \midrule
        
        CONLL & FT & Bert-base & Acc & $93.38_{0.08}$ & $93.13_{0.14}$ & $87.63_{0.39}$ & \boldsymbol{$91.90_{0.48}$} \\  
        
        \midrule
        
        \multirow{2}{*}{WMT} & \multirow{2}{*}{PT} & \multirow{2}{*}{Transformer-base} & BLEU & 27.5 & 25.4(Ultra Low) & 27.17 & - \\ 
        ~ & ~ & ~ & SacreBLEU & 26.5 & - & - & 25.57 \\ 
        
        \midrule
        
        \multirow{2}{*}{CIFAR10} & \multirow{2}{*}{FT} & ViT-B/32 & \multirow{2}{*}{Top1 Acc} & $98.77_{0.03}$ & $98.59_{0.02}$ & $97.76_{0.10}$ & \boldsymbol{$98.36_{0.05}$} \\ 
        ~ & ~ & ViT-L/32 & ~ & 98.98 & 98.76 & 98.38 & \textbf{98.47} \\ 
        
        \midrule

        \multirow{2}{*}{CIFAR100} & \multirow{2}{*}{FT} & ViT-B/32 & \multirow{2}{*}{Top1 Acc} & $91.94_{0.11}$ & $90.99_{0.07}$ & $88.63_{0.085}$ & \boldsymbol{$89.78_{0.06}$} \\ 
        ~ & ~ & ViT-L/32 & ~ & 93.07 & 92.2 & 90.97 & \textbf{91.13} \\ 
        
        \midrule
        
        \multirow{4}{*}{ImageNet1k} & \multirow{3}{*}{FT} & ViT-B/32 & \multirow{3}{*}{Top1 Acc} & 81.88 & 80.42 & 77.25 & \textbf{79.18} \\ 
        ~ & ~ & ViT-L/32 & ~ & 81.62 & 81.3 & 77.41 & \textbf{80.06} \\ 
        ~ & ~ & ViT-L/16 & ~ & 84.55 & 83.05 & 82.4 & \textbf{82.61} \\ 
        \cmidrule(r){2-8}
        ~ & PT & Deit-small & Top1 Acc & 73.1 & 70.95 & \textbf{69.96} & 69.18 \\ 
        \bottomrule
    \end{tabular}}
\end{sc}
\end{footnotesize}
\vspace{-0.5cm}
\end{table}

\section{Experiments}\label{sec: experiments}
We evaluate our INT4 training algorithm on a wide variety of tasks including language model fine-tuning, machine translation, and image classification. We implement our proposed \textbf{\hqmm}~and \textbf{\lssmm}~algorithms with CUDA and cutlass\footnote{\url{https://github.com/NVIDIA/cutlass}}, and the implementation details can be found in Appendix~\ref{appendix: sec: Implementation Details}. We replace all the floating-point linear operators with our INT4 implementation except simply using LSQ for embedding layers, and leaving the last classifier layer in full precision.
We adopt default architectures, optimizers, schedulers, and hyper-parameters for all the evaluated models. 


\subsection{Converged Model Accuracy}\label{subsec: Language Model Fine-tuning}
We compare the accuracy of the converged model on various tasks in Table~\ref{table: experiments big table}.
The compared methods include full-precision training (FP), INT8 training~\cite{banner2018scalable}(INT8), FP4 training~\cite{sun2020ultra} (``Ultra-low''), 4-bit logarithm quantization~\cite{chmiel2021logarithmic} with LSQ for activations and weights (LSQ+LUQ), and our algorithm which utilizes HQ for forward  and LSS for backpropagation (HQ+LSS). 
Ultra-low does not have a public implementation, so we only report its performance from its original paper on the machine translation task. 
Except for the large machine translation task and the task of large vision transformers, we repeat each run by three times and report the standard deviation as subscripts in tables. We do not include any kind of knowledge distillation or data augmentation. 

\paragraph{Language model fine-tuning: }
We use the pretrained BERT-base-uncased and BERT-large-uncased ~\cite{kenton2019bert} model, and evaluate the performance of our method on GLUE dev-set~\cite{wang2018glue}, SQUAD~\cite{rajpurkar2016squad}, SQUADv2~\cite{rajpurkar2018squad2.0}, Adversarial QA~\cite{bartolo2020adversarialQA}, CoNLL-2003~\cite{sang2003conll} and SWAG~\cite{zellers2018swag} datasets.
We present the average result of bert-base-uncased and bert-large-uncased model on the GLUE dataset. The full results are listed in Appendix ~\ref{appendix: subsec: GLUE results}. Compared with LSQ+LUQ, our method achieves $5.5\%$ improvement of accuracy on average for the bert-base model and achieves $>25\%$ improvement of accuracy on average for the bert-large model.
We further show the result on the SQUAD, SQUAD 2.0, Adversarial QA, CoNLL-2003, and SWAG  datasets. On all of the tasks, compared with LSQ+LUQ, our method achieves better performance. We improve by $1.8\%$ and $3.6\%$ on SQUAD and SQUAD 2.0 compared to LSQ+LUQ, respectively. On the more difficult Adversarial QA, we improve by $6.8\%$ on F1 score.
On SWAG we improve by $6.7\%$ and on CoNLL-2003 we improve by $4.2\%$ accuracy.

\paragraph{Machine translation: }

We also apply our method for pretraining. 
We train a Transformer-base~\cite{vaswani2017attention} model on WMT 14 En-De dataset~\cite{bojar2014WMTENDE} for machine translation.
Note that we reproduce this experiment with Fairseq's recipe~\footnote{\url{https://github.com/facebookresearch/fairseq}}, which reports the SacreBleu score (26.5 for FP)~\cite{post2018sacrebleu}, while Ultra-low and LUQ report the more optimistic original BLEU score (27.5 for FP)~\cite{papineni2002bleu}. Our HQ+LSS has about $1.0\%$ BLEU degradation, which is smaller than $2.1\%$ of Ultra-low and higher than $0.3\%$ reported in the LUQ paper. Nevertheless, HQ+LSS still performs comparably with existing methods for this pretraining task, and it supports contemporary hardware. 
\paragraph{Image Classification: }
We load ViT checkpoints pretrained on ImageNet21k~\cite{dosovitskiy2020image}, and fine-tune it on CIFAR-10, CIFAR-100~\cite{krizhevsky2009CIFAR10}, and ImageNet1k. We use ViT-B/32 and ViT-L/32 for CIFAR datasets and use ViT-B/32, ViT-L/32 and ViT-L/16 for ImageNet1k.
On CIFAR10 we achieve $<0.5\%$ accuracy degradation, while LSQ+LUQ has $1\%$ degradation for ViT-B/32 and $0.6\%$ degradation for ViT-L/32. On CIFAR100, INT8 already has $\sim 1\%$ accuracy degradation, which shows its difficulty. We improve by $1.1\%$ accuracy for ViT-B/32 and $0.2\%$ accuracy for ViT-L/32 compared with LSQ+LUQ. On ImageNet1k, we improve by $2\%$ accuracy for ViT-B/32, $2.6\%$ accuracy for ViT-L/32 and $0.2\%$ for ViT-L/32 compared with LSQ+LUQ. 
We further test the effectiveness of our algorithm for pretraining a DeiT-Small model~\cite{touvron2021deit} on ImageNet1K, where HQ+LSS can still converge to similar accuracy level compared to LSQ+LUQ, while being more hardware friendly.

 

\begin{figure}[t]
\vspace{-0.1cm}
    \begin{minipage}{0.38\textwidth}
    \centering
        \subfigure{\label{fig: forward ablation}{\includegraphics[width=0.48\linewidth]{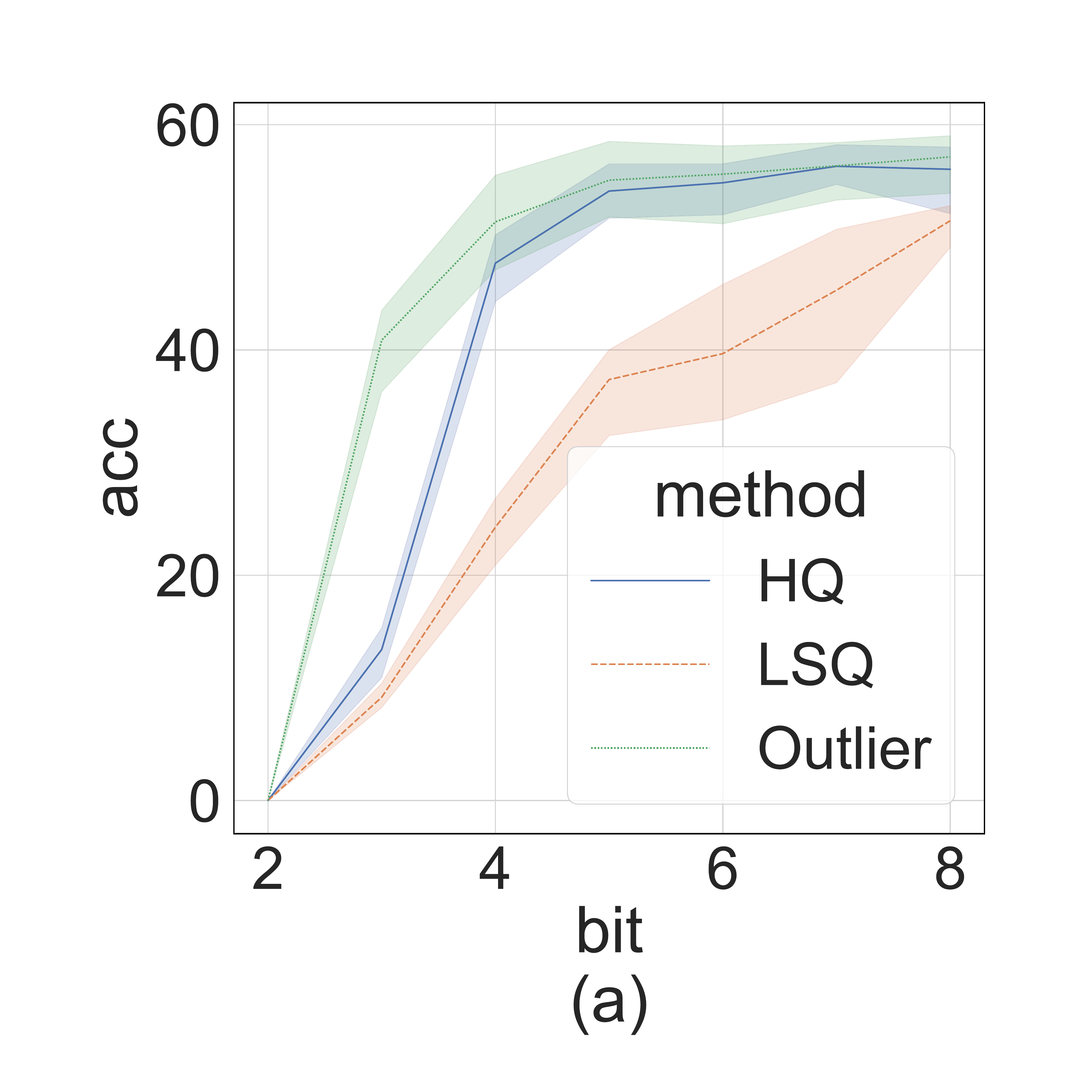}}}\hfill
        \subfigure{\label{fig: backward ablation}{\includegraphics[width=0.48\linewidth]{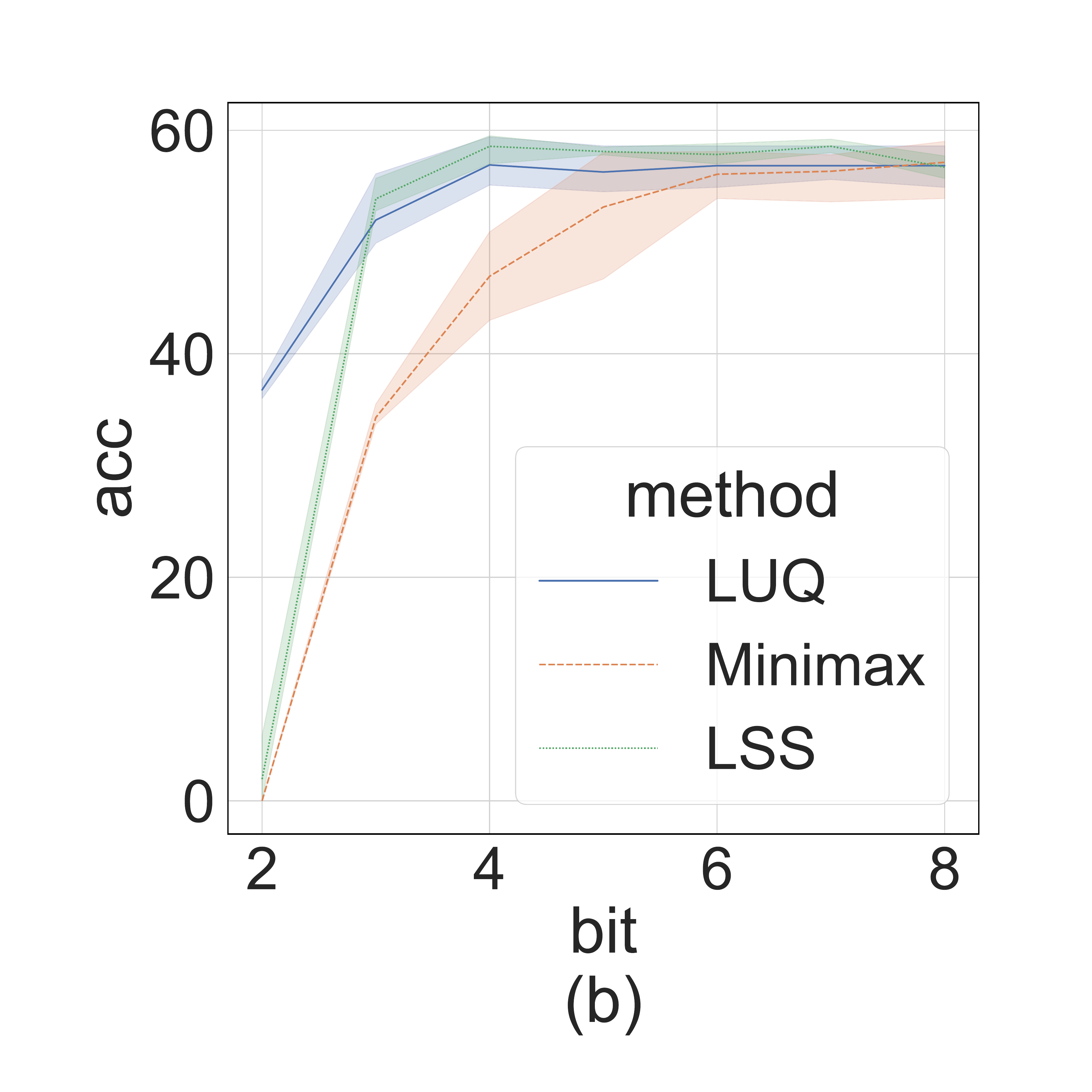}}}
    \caption{CoLA performance under different methods using different bits. (a) Comparison of forward methods. (b) Comparison of backward methods.}
    \label{fig: cola ablation}
    \end{minipage}\hfill
    \begin{minipage}{0.19\textwidth}
    \centering
        \includegraphics[width=0.96\linewidth]{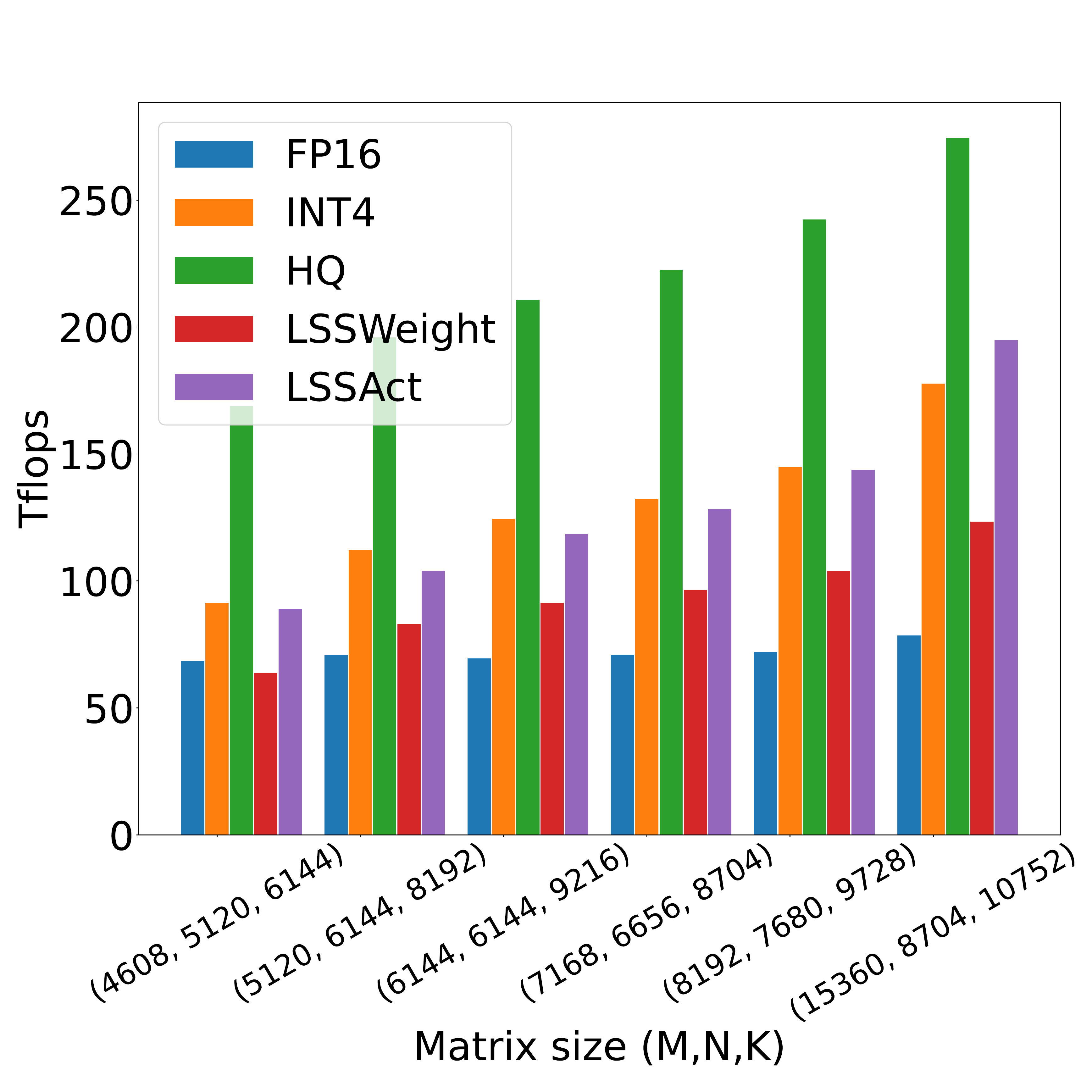}
    \caption{Comparison of basic FP16 MM, HQ, and LSS operators.}
    \label{fig:tflops}
    \end{minipage}\hfill
    \begin{minipage}{0.38\textwidth}
    \centering
        \subfigure{\label{pic: speedup bert}{\includegraphics[width=0.48\linewidth]{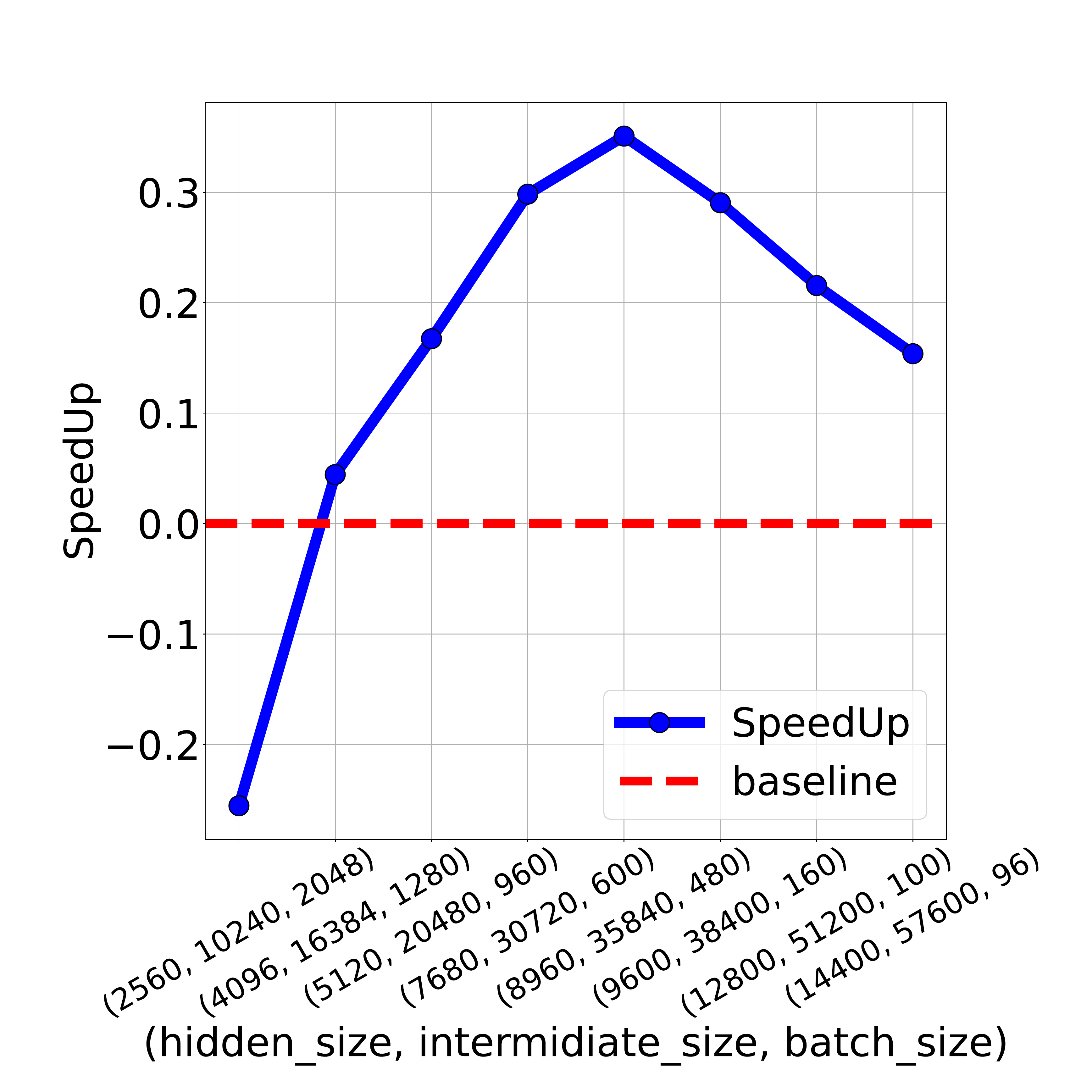}}}\hfill
        \subfigure{\label{pic: speedup gpt}{\includegraphics[width=0.48\linewidth]{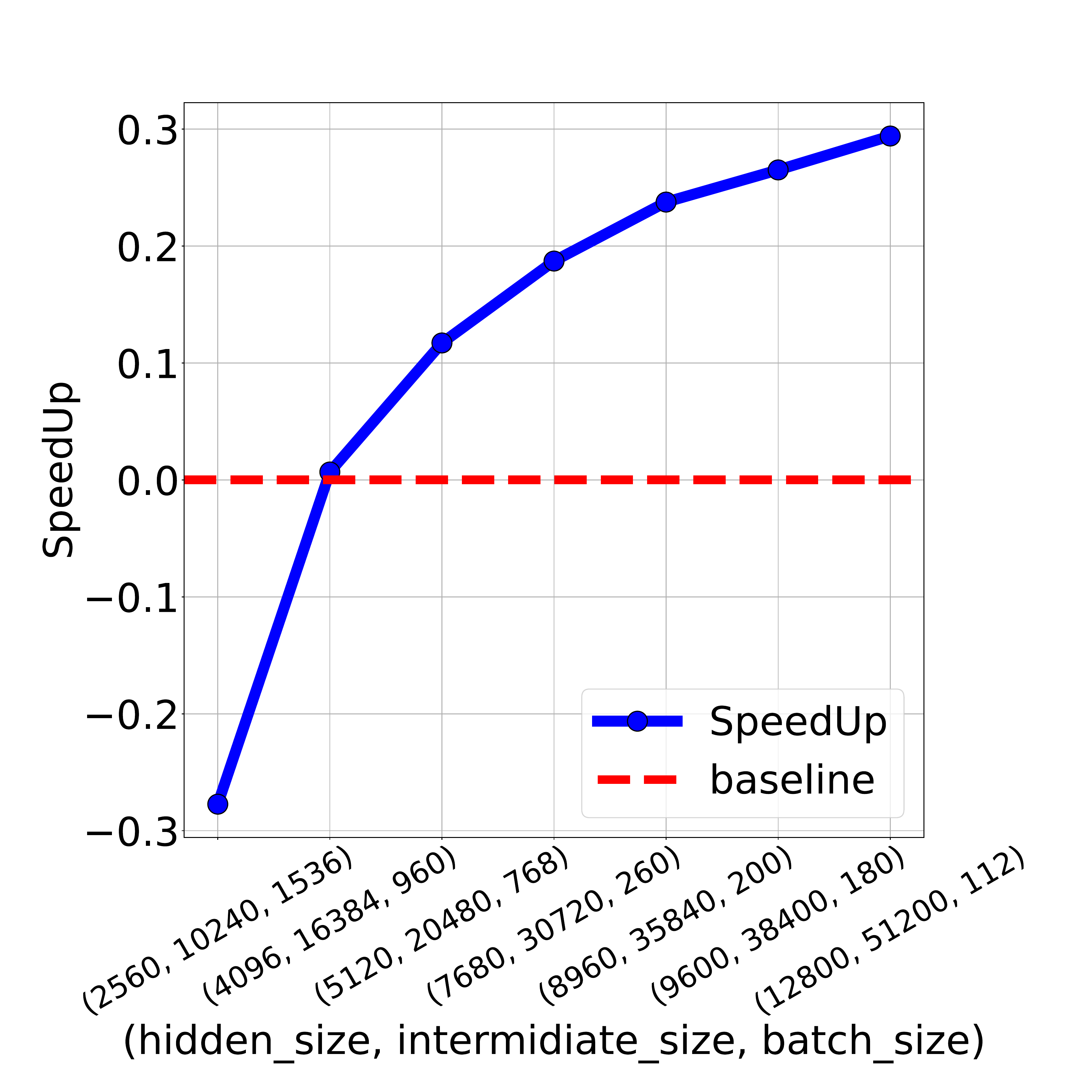}}}
    \caption{SpeedUp of our INT4 training algorithm compared with FP16 PyTorch AMP on (a) Bert-Large (b) Gpt2-base. }
    \label{333}
    \end{minipage}\hfill
    \vspace{-0.5cm}
\end{figure}


        

\subsection{Ablation Study}\label{subsec: ablation}
Here, we conduct ablation studies to show the effectiveness of our forward and backward methods independently on the challenging CoLA dataset. To study the effectiveness of different quantizers for forward propagation, we leave backpropagation in FP16. The result is shown in Fig.~\ref{fig: forward ablation}. We first validate the claim in Sec.~\ref{subsec: Challenges: Outlier and Cold Start} that outliers are the main cause of accuracy degradation in quantized forward propagation.  
We test an ``outlier'' method which maintains $1\%$ largest activation entries in FP. The ``outlier''  method achieves good performance, which proves that outliers are indeed the most significant challenge of the transformer's forward quantization. 
The hardware-unfriendly ``outlier'' method serves as an upper bound of methods to handle outliers. 
Our HQ outperforms LSQ by better handling the outliers and achieves comparable results to maintaining the outliers.

We also investigated whether more granular quantizers, such as per-token quantization or per-channel quantization could be used to quantify outliers, or whether existing methods like SmoothQuant~\cite{xiao2022smoothquant} could be used for INT4 FQT. The results are listed in Appendix ~\ref{appendix: subsec: Granular Quantization}, and we find that without HQ, none of these methods achieve good accuracy under 4-bit quantization, and the result of HQ is not strongly affected when more granular quantization methods are applied. 

For backpropagation, we compare a simple minimax quantizer~\cite{banner2018scalable}, LUQ~\cite{chmiel2021logarithmic} and our LSS, and leave forward propagation in FP16. 
The minimax quantizer divides the numerical range from the minimum to the maximum into equally large quantization bins. The result is shown in Fig.~\ref{fig: backward ablation}. While the bit-width is higher than 2, our LSS achieves results that are comparable and even slightly higher than LUQ. 
Meanwhile, LSS is more hardware friendly as it requires only INT4 arithmetic.

\subsection{Computational and Memory Efficiency}\label{subsec: operator speed}

Finally, we demonstrate the potential of our method to accelerate neural network training by evaluating our prototypical implementation discussed in Appendix~\ref{appendix: subsec: Real Quantization}. 
We emphasize that our implementation is not fully optimized. For example, the backward computation requires an INT4 MM in the form of $\Yv=\Av\Bv$, while cutlass only supports $\Yv=\Av \Bv^\top$, so explicit transpose is required. We also do not fuse the linear operators with nonlinearities and normalizations. Therefore, the results cannot fully reflect the potential of INT4 training algorithms. 
A fully optimized implementation requires heavy engineering, which exceeds the scope of our paper. 

\paragraph{Operator Speed: }
We compare the throughput of our proposed \texttt{HQ-MM} (HQ), LSS for computing weight gradient (LSSWeight), LSS for computing activation gradient (LSSAct), and their average throughput (INT4) with a baseline tensor-core FP16 GEMM implementation (FP16) provided by cutlass in Fig.~\ref{fig:tflops} on an Nvidia RTX 3090 GPU which has a peak throughput at 142 FP16 TFLOPs and 568 INT4 TFLOPs. 
As the matrix size grows, the overhead of quantization diminishes and our INT4 operators can be up to 2.2 times faster compared with FP16 MM. We further analyze the quantization overhead for each operator in Appendix~\ref{appendix: subsec: operator speed}. 

\paragraph{Training Throughput: } We compare the training throughput of the FP16 PyTorch AMP and our INT4 training algorithm for training BERT~\cite{kenton2019bert} and GPT~\cite{radford2019language}-style language models on a system of 8 Nvidia A100 GPUs. We vary the hidden layer size, intermediate fully-connected layer size, and batch size, and plot the speedup of INT4 training in Fig.~\ref{333}. Our INT4 training algorithm can achieve up to 35.1\% speedup for BERT-style models and up to 26.5\% speedup for GPT-style models. The training time can be found in Appendix~\ref{appendix: subsec: llm speed}. 

\section{Conclusions}
\vspace{-0.2cm}
We propose a hardware-friendly INT4 training method for transformers. By analyzing the properties of MMs in transformers, we propose HQ and LSS methods to quantize activations and gradients while preserving accuracy. On several important tasks, our method performs comparably or better than existing INT4 methods. Our work can be potentially extended beyond transformers to other MM-only architectures, such as MLP-Mixer~\cite{tolstikhin2021mlpmixer}, graph neural networks~\cite{kipf2016semigcn}, and recurrent neural networks~\cite{hochreiter1997long}. We leave it as a future direction. 

\paragraph{Broader Impacts: }
Our algorithm can improve efficiency and reduce the energy consumption of training neural networks, which helps reduce the carbon footprint caused by deep learning. However, our efficient training algorithm might also facilitate the development of large language models with safety concerns for human beings; and malicious AI applications such as fake content generation. 


\paragraph{Limitations: }
The main limitation of this work is that it can only accelerate models with a large portion of matrix multiplications (linear layers), but can not accelerate convolution layers. Moreover, the proposed method cannot yet work well for those extremely large models such as OPT-175B. To the best of our knowledge, even INT8 training is still an open problem for these large models. 


\newpage

\bibliography{refs}
\bibliographystyle{plain}

\newpage
\appendix

\section{Implementation Details}\label{appendix: sec: Implementation Details}
In this section, we present some works that need to be done to actually accelerate the training process on hardware. 

\subsection{BMM in Attention}\label{appendix: subsec: BMM in Attention}
In attention, there are batch matrix multiplications (BMMs) that need to be dealt with. We now show that our method for MMs can be extended to BMMs.

Consider the following BMM product:
\begin{align*}
    \Tv = \mbox{BMM}(\Qv, \Kv^\top),
\end{align*}
where we define $\Tv\in \Rb^{B\times N\times P}, \Qv\in \Rb^{B\times N\times M}, \Kv\in \Rb^{B\times P\times M}.$ The Hadamard matrix is defined as :
\begin{align*}
\hat\Hv = \mbox{Repeat}_B(\Hv) = \mbox{Repeat}_B(\mbox{BlockDiag}(\Hv_k, \dots, \Hv_k)),
\end{align*}
where $\hat\Hv\in \Rb^{B\times M\times M}, \Hv\in \Rb^{M\times M}, \Hv_k\in \Rb^{2^{k} \times 2^{k}}.$ In this case, 
\begin{align*}
    \Tv \approx \mbox{BMM}\big(\mbox{BMM}(\Qv, \hat\Hv), \mbox{BMM}(\Kv, \hat\Hv)^\top\big),
\end{align*}
which verifies that our HQ can be applied to BMMs.

For backward, the gradient of weight and activation can be calculated by the straight-through estimator $\round{x}^\prime = 1$ and the chain rule:
\begin{align*}
\nabla_{\Qv}&=s_Q\left(\mbox{BMM}(\nabla_{\Tv}^\top, \hat\Kv) \circ \Ib_Q\right) \Hv^\top,\nonumber\\
\nabla_{\Kv}&=s_K\Ib_K\circ\mbox{BMM}(\nabla_{\Tv},\hat\Qv) \Hv^\top 
=s_K\mbox{BMM}(\Ib_K\circ\nabla_{\Tv},\hat\Qv) \Hv^\top ,
\end{align*}
where we define $s_Q\in \Rb^{B}, s_k\in \Rb^{B}$ being the batch step size, $\hat \Kv = \toint{s_K}{\mbox{BMM}(\Kv, \hat\Hv)}$, 
$\hat \Qv = \toint{s_Q}{\mbox{BMM}(\Qv, \hat\Hv)}$, $\Ib_Q = \Ib(-Q_N\le \Qv/s_Q\le Q_P)$, and $\Ib_K = \Ib(-Q_N\le \Kv/s_K\le Q_P)$.

Similar to Sec.~\ref{subsec: Bit Splitting and Leverage Score Sampling}, we only focus on $\mbox{BMM}(\nabla_{\Tv}^\top, \hat\Kv)$ and $\nabla_{\Tv}$, since we do leverage sampling on them.

For $\mbox{BMM}(\nabla_{\Tv}^\top, \hat\Kv)$, we define the sample probability $p_i$ and sample the $\tilde\Mv$ in the same way as MMs. The matrix can be computed as $\mbox{BMM}(\mbox{BMM}({\nabla_{\Tv}^{\updownarrow}}^\top, \hat{\tilde{\Hv}}), {\hat\Kv}^{\updownarrow})$, where $\hat{\tilde{\Hv}}$ is defined as $\mbox{CONCAT}(\tilde\Hv_1, \cdots, \tilde\Hv_B)$, ${\nabla_{\Tv}^{\updownarrow}}^\top$ and ${\hat\Kv}^{\updownarrow}$ follows the same definition of Eq.~\ref{eqn: weight-grad}and the leverage score is $c_{b, i}:= \lVert {{\nabla_{\Tv}^{\updownarrow}}_{b, i,:}}\rVert\lVert{{\Kv}_{b, i, :}^\updownarrow}\rVert$ for $0\leq b \le B, 0\leq i \le 2M.$

For $\nabla_{\Tv}$, similarly, can be viewed as
$\nabla_{\Tv} = \mbox{BMM}(\hat{\Iv}^{\updownarrow},
\nabla_{\Tv}^{\updownarrow}), $where we define $\nabla_{\Yv}^{\updownarrow}=\mbox{CONCAT}([
{s_\uparrow}_b {{\nabla_{\Tv}^{\uparrow}}_b};  {s_\downarrow}_b {{\nabla_{\Tv}^{\downarrow}}_b}])\in\Rb^{B\times 2N \times P}$, $\hat{\Iv}^{\updownarrow} = \mbox{CONCAT}([\Iv~~\Iv])\in\Rb^{B\times N\times 2N}$, ${s_\uparrow}_b, {{\nabla_{\Tv}^{\uparrow}}_b}, {s_\downarrow}_b, {\nabla_{\Tv}^{\downarrow}}_b$ follows the definition of Eq.\ref{eqn: bs}. So it can be computed as $\mbox{BMM}(\mbox{BMM}(\hat{\Iv}^{\updownarrow}, \hat{\tilde{\Hv}}),
\nabla_{\Tv}^{\updownarrow})$, where $\hat{\tilde{\Hv}}$ is defined as $\mbox{CONCAT}(\tilde\Hv_1, \cdots, \tilde\Hv_B)$, and the leverage score is $c_{b, i}:= \lVert {{\nabla_{\Tv}^{\updownarrow}}_{b, i,:}}\rVert$ for $0\leq b \le B, 0\leq i \le 2M,$ which verifies that our LSS can be applied to BMM.

\subsection{Computing Leverage Score}\label{appendix: subsec: leverage score}
In the previous discussion, we find the optimal sample probability $p_i$ that can minimize the variance of the gradient. However, it is likely for the proportional $p_i$ is larger than one, which is invalid for the Bernoulli distribution. Accordingly, we propose an algorithm to solve this issue.

Define the probability array as \[P=[p_1^0, \cdots, p_{2N}^0], \sum_{i=1}^{2N} p_i^0=N,\]we first clamp the array to $p_i^1\in [0, 1]$. In this case, 
$\sum_{i=1}^{2N} p_i^1\leq N$, so we scale the $p_i$ which is smaller than 1 to make sure their sum is again $N$. However, this will probably introduce some more elements larger than 1, so we cycle through the above operations until all the $p_i \in [0, 1].$ This process will certainly stop, since if after the scaling operation, no element is larger than 1, then we get a valid distribution. Otherwise, the number larger than 1 is reduced by at least one, thus the process will halt after at most $O(N)$ times.

\subsection{Learning Quantizer Parameters}\label{appendix: subsec: Learning Quantizer Parameters}
In this section, we discuss the detail of how to calculate the gradient of activation and quantization step size. 

For gradient of activation, the coefficient $c_i:= \lVert{\nabla_{{\Yv}}^{\updownarrow}}{}_i\rVert$ is the \emph{leverage score} for activation gradient, and the variance achieves its minimum 
When $p_i \propto c_i$ by the Cauchy Inequality.

Putting everything together, we propose the following MM procedure to compute activation gradient:
\fbox{
 \parbox{0.95\linewidth}{
{\bf Procedure \lssmm}  
\begin{enumerate}[noitemsep,nolistsep,leftmargin=*,topsep=0pt]
  \item Quantize $\nabla_{\Yv}$ with  BS to obtain $\nabla_{\Yv}^\uparrow$ and  $\nabla_{\Yv}^\downarrow$ in INT4. 
  \item Compute the leverage score $\lVert{\nabla_{{\Yv}}^{\updownarrow}}{}_i\rVert$ in FP16. 
  \item Sample the masks $\{m_i\}$.
  \item Sample rows of $\nabla_{\Yv}$ given the masks $\{m_i\}$.
  \item Compute $\Iv \tilde \Mv^\uparrow \nabla_{{\Yv}}^{\uparrow}$ and $\Iv \tilde \Mv^\downarrow \nabla_{{\Yv}}^{\downarrow}$ by discard some of its rows.
  \item Compute INT4 MMs $\Iv \tilde \Mv^\uparrow \nabla_{{\Yv}}^{\uparrow}\hat\Wv$ and $\Iv \tilde \Mv^\downarrow \nabla_{{\Yv}}^{\downarrow}\hat\Wv.$
  \item Dequantize and sum up the resultant INT32 matrices to obtain the FP16 result ${\hat{\Iv}^{\updownarrow}\nabla_{\Yv}^{\updownarrow}}\hat\Wv$.
\end{enumerate}
}
}
The two matrix multiplications in Step 5 take about $2NCD$ INT4 MACs in expectation.

For the quantization step sizes.
Following the chain rule, we have
{\small
\begin{align*}
    \nabla_{s_W} = g(s_W)\nabla_{\Yv}^\top \hat \Xv\circ\delta_{\Wv}(s_W),~
    \nabla_{s_X} = g(s_X)\nabla_{\Yv} \hat \Wv\circ\delta_{\Xv}(s_X),
\end{align*}}
where we define $g(s_W) = 1/\sqrt{Q_p N_W}$, $g(s_X) = 1/\sqrt{Q_p N_X}$, $N_W$ and $N_X$ being the number of elements of weight and activation, $\delta_{\Xv}(s_X)=\toint{s_X}{\Xv}-\Ib_X\circ (\Xv/s_X)$, and 
$\delta_{\Wv}(s_W)=\toint{s_W}{\Wv}-\Ib_W\circ (\Wv/s_W)$.


Notice that for computing $\nabla_{s_W}$ and $\nabla_{s_X}$, the most expensive MMs are $\nabla_{\Yv}^\top \hat \Xv$ and $\nabla_{\Yv} \hat \Wv$, which are already calculated through Eq.~(\ref{eqn: weight-grad-leverage}) and Eq.~(\ref{eqn: activation-grad 2}) during previous calculations, so it does not require extra computation. The elementwise multiplication with $\delta_{\Xv}(s_X)$ and $\delta_{\Wv}(s_W)$ requires minor computation.

\subsection{Cold Start Problem}\label{appendix: subsec: Cold start problem}
There is a \emph{cold start problem}.
When the model is trained from scratch (i.e., from a random initialization), distributions of weights and activations can change rapidly in the early stage of optimization. In this case, jointly optimizing the quantization step size and the weights would cause the training to be unstable. As a remedy, we do not learn the step size in the first few iterations, and use a heuristic rule to dynamically set the step size for each tensor $\Xv$ to $2\mbox{mean}(\Xv) / \sqrt{Q_p}$ in each iteration.
\subsection{Choose hadamard matrix size}\label{appendix: subsec: Choose_hadamard_matrix_size}
For the hadamard matrix, let the hadamard matrix to be $\Hv\in \Rb^{D\times D}$:
$ \Hv = \mbox{BlockDiag}(\Hv_k, \dots, \Hv_k), $
where $D$ is a multiple of $2^k$. We first define 
\[
\bar{\Xv}_k=s_{X}\toint{s_{X}}{\Xv \Hv}\Hv^\top,~~~~
\bar{\Wv}=s_{W}\toint{s_{W}}{\Wv \Hv}\Hv^\top,\]
where $\bar{\Xv}$ and $\bar{\Wv}$ can be viewed as an approximation of $\Xv$ and $\Wv$. Then, we define the quantization error to be $\mbox{MSE}(\bar{\Xv}, \Xv) \times \mbox{MSE}(\bar{\Wv}, \Wv).$ We search for the optimal $k$ that can minimize this quantization error. For fine-tuning tasks, once the hadamard matrix size has been calculated, we fix it through the training process. For the pre-training task, since the distribution shifts greatly as we train the model, we empirically define a time when we re-initialize the hadamard matrix size and the LSQ step size. Usually, we do this when the first 2 epochs finish.

\subsection{GPU Implementation}\label{appendix: subsec: Real Quantization}
In the previous discussion, we get to know \hqmm~ and \lssmm~ from an algorithm level,  nevertheless it is not enough to actually implement it on hardware. In this section, we will delve deeper into hardware implementation details as well as extra limitations. 

\hqmm~ can be divided into 5 parts: Hadamard matrix multiplication, Quantize, Data Pack, INT4 GEMM, and Dequantize.

For the Hadamard matrix multiplication process, since it can be interpreted as a half float matrix multiplication process where the two matrices involved in the operation are input/weight matrix and hadamard matrix, respectively, we implement it in Python, because PyTorch MM uses CublassGemm and is more efficient then CutlassGemm. 

In the quantize process, we quantize input/weight into INT4 data respectively, and also preserve a corresponding FP16 version for the LSQ Back Propagation process to use.

In the previous discussion, we assume the quantize part of \hqmm~ is quantizing the resultant matrices to INT4, however, the smallest representation unit of data is INT8. As a result, we actually use INT8 data type to represent quantized data and pack two adjacent data into one data using $(data[1] << 4) | (data[0] \& 15)$ in the data packing process, which means we use one INT8 data to represent two adjacent INT4 data. With both input matrices' data packed in this way, we then use cutlass tensor-core INT4 GEMM to do the matrix multiplication. 

For the GEMM process, we choose Nvidia CutlassGemm because it's the most efficient open-source operator library we can find. We use INT4 Tensor Core Gemm for our implementation and it requires the two input matrices A\&B to be RowMajor and ColMajor, respectively. Since the default Pytorch tensor is RowMajor, we have to use Transpose+Contiguous operations to make it ColMajor, which is very time-consuming and needs further optimization in the future.   

Finally, we dequantize the INT GEMM result back into FP16 output using a dequantize kernel, which is the final output of the forward kernel.  

As compared, \lssmm~ is more complicated, and can be divided into 7 parts: Quantization of higher lower 4-bit, Leverage Score Calculating, Sampling, Data Pack, INT4 GEMM, Dequantize, and LSQ Back Propagation.

In the Quantize process, we fuse the quantize operation of higher 4-bit and lower 4-bit into a single kernel for acceleration. In the Leverage Score Calculating process, we use the quantized INT8 data to calculate the score and scale up it in the final because integer arithmetic is far more efficient than float arithmetic.

In the sampling process, we sample out rows/columns given the previously calculated leverage score. Note that in Section.~\ref{appendix: subsec: leverage score}, we repeat our proposed algorithm for several loops to sample out specific elements, which is effective but not efficient. According to experiments, however, we notice that simply selecting elements whose leverage score is bigger than 0 can also work well, even better than our proposed algorithm in some cases. So in real quantization implementation, we just sample out rows/ columns whose Euclidean norm is bigger than 0 to accelerate our training process. 

Pack, Gemm, and Dequantize processes are as similar as before. It's worth noting that for Int4 Tensor Core Gemm, suppose two input matrices have shape ${M \times K}$ and ${K \times N}$, $K$ needs to be a multiple of 32 so that the Tensor core Gemm address can be aligned. We do not need to consider this in the Forward Propagation process because the input data shape always satisfies. However, in the Back Propagation process, the matrix shape may not meet the requirement after sampling. As a result, we need zero\_padding the sampled matrix so that $K$ can be a multiple of 32.  

Finally, we utilize the dequantized data to do the LSQ Back Propagation. We also fuse all operations into a single Cuda kernel for acceleration, and the metric remains.

Besides the component of \hqmm~ and \lssmm~, there is still something that needs to be mentioned.
\begin{enumerate}
  \item We omit the Quantization and Leverage Score Calculating process in LSSinput, and use the same value as LSSWeight to accelerate the training process.
  \item For Element-Wise kernel, we set block size as 256, grid size as input.numel()/256. For Reduction kernels like sum and min/max, we set block size as 32, grid size as RowNum, reducing elements in each row to the first 32 elements. We find this setting to be most efficient through experiments.
\end{enumerate}





\section{Proofs.}\label{appendix: sec: proof}
In this section, we present the proofs of the leverage score.

\subsection{Proof of Proposition.~\ref{Proposition: weight}}\label{appendix: subsec: Proof of Proposition weight}

\begin{proposition} (LSS variance for weight gradient)\label{appendix:A.1}
\begin{align*}
\Var{\sum_{i=1}^{2N} \frac{m_i}{p_i}  {\nabla_{\Yv}^{\updownarrow}}_{:,i}^\top\Xv_{i}^\updownarrow}
= \sum_{i=1}^{2N} \frac{1-p_i}{p_i} \lVert {{\nabla_{\Yv}^{\updownarrow}}_{i,:}}\rVert^2\lVert{{\Xv}_{i, :}^\updownarrow}\rVert^2.
\end{align*}
\end{proposition}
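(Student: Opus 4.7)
The plan is a direct second-moment calculation. Writing $M_i$ for the $i$-th rank-one summand in Eq.~(\ref{eqn: weight-grad-leverage}) (i.e., the outer product of the $i$-th row of $\nabla_{\Yv}^{\updownarrow}$ with the $i$-th row of $\Xv^\updownarrow$), let $Y := \sum_{i=1}^{2N} (m_i/p_i)\, M_i$. Since the masks $\{m_i\}$ are independent Bernoulli with $\E{m_i}=p_i$, each scaled mask satisfies $\E{m_i/p_i}=1$, so $\E Y = \sum_i M_i$ and therefore $Y - \E Y = \sum_i \tfrac{m_i - p_i}{p_i} M_i$. By the definition $\Var{Y} := \E{\|Y - \E Y\|_F^2}$, the task reduces to computing this second moment.

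Next I would expand the squared Frobenius norm as a double sum and take expectation:
\begin{align*}
\E{\|Y - \E Y\|_F^2} = \sum_{i,j=1}^{2N} \frac{\E{(m_i - p_i)(m_j - p_j)}}{p_i p_j}\, \langle M_i, M_j\rangle_F.
\end{align*}
For $i \neq j$, independence of $m_i, m_j$ together with zero-mean centering gives $\E{(m_i-p_i)(m_j-p_j)} = 0$, so all cross terms drop. On the diagonal, the standard Bernoulli second moment yields $\E{(m_i - p_i)^2} = p_i(1 - p_i)$, so after dividing by $p_i^2$ the scalar coefficient of $\|M_i\|_F^2$ collapses to $(1-p_i)/p_i$.

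Finally, because each $M_i$ is a rank-one outer product, its squared Frobenius norm factorizes as $\|M_i\|_F^2 = \lVert ({\nabla_{\Yv}^{\updownarrow}})_{i,:}\rVert^2 \lVert \Xv_{i,:}^\updownarrow\rVert^2$, which is precisely the factor appearing in the stated identity. Assembling these pieces gives the claim. There is no real obstacle here; the only items worth double-checking are (i) that the $\{m_i\}$ are pairwise independent so that the cross terms truly vanish, and (ii) the outer-product Frobenius factorization, both of which are routine.
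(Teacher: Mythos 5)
Your proposal is correct and takes essentially the same route as the paper's own proof: both are direct second-moment computations that use independence of the Bernoulli masks to annihilate the cross terms, the identity $\E{(m_i-p_i)^2}=p_i(1-p_i)$, and the rank-one factorization $\lVert M_i\rVert_F^2=\lVert(\nabla_{\Yv}^{\updownarrow})_{i,:}\rVert^2\,\lVert\Xv_{i,:}^{\updownarrow}\rVert^2$. The only difference is cosmetic: you organize the expansion via Frobenius inner products $\langle M_i,M_j\rangle_F$, whereas the paper writes out the same double sum entrywise over the indices $j,k$.
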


\begin{proof}
\begin{align*}
    Var(\nabla_{\Wv}) &= Var\Big(\sum_{i=1}^{2N} \frac{1}{p_i} ( m_i {\nabla_{\Zv}^{\updownarrow}}_{:,i}^\top{\Xv}_{i}^\updownarrow )\Big) \\
    &= Var\Big(\sum_{i=1}^{2N} \frac{1}{p_i} (\sum_{j=1}^C \sum_{k=1}^D  m_i {\nabla_{\Zv}^{\updownarrow}}_{j,i}^\top {\Xv}_{i, k}^\updownarrow)\Big) \\
    &= \sum_{i=1}^{2N} \frac{p_i(1-p_i)}{p_i^2} Var\Big((\sum_{j=1}^C \sum_{k=1}^D {\nabla_{\Zv}^{\updownarrow}}_{j,i}^\top{\Xv}_{i, k}^\updownarrow)\Big) \\
    &= \sum_{i=1}^{2N} \frac{1-p_i}{p_i} (\sum_{j=1}^C \sum_{k=1}^D {{\nabla_{\Zv}^{\updownarrow}}_{j,i}^\top}^2{\Xv_{i, k}^\updownarrow}^2).
\end{align*}
\end{proof}

So that
\begin{align}
    Var(\nabla_{\Wv}) &= \sum_{i=1}^{2N} (\frac{1}{p_i} - 1) (\sum_{j=1}^C {{\nabla_{\Zv}^{\updownarrow}}_{j,i}^\top}^2)(\sum_{k=1}^D {{\Xv}_{i, k}^\updownarrow}^2) \\
    &= \sum_{i=1}^{2N} (\frac{1}{p_i} - 1) \lVert {{\nabla_{\Zv}^{\updownarrow}}_{:,i}^\top}\rVert^2\lVert{{\Xv}_{i, :}^\updownarrow}\rVert^2, 
\end{align} which proves.
\subsection{Proof of Activation Leverage Score in Sec.~\ref{subsec: Activation Gradient}}\label{appendix: subsec: Proof of Proposition activation}
we divide the matrix multiplication into the sum of $2N$ smaller multiplications:
\begin{align}
{\hat{\Iv}^{\updownarrow}\nabla_{\Yv}^{\updownarrow}} = 
    \sum_{i=1}^{2N} \hat{\Iv}^{\updownarrow}_{:, i} {\nabla_{{\Yv}}^{\updownarrow}}{}_i
    = \sum_{i=1}^{2N} \hat\nabla_{{\Yv}_i},
\end{align}
where we define $\hat\nabla_{{\Yv}_i} = \hat{\Iv}^{\updownarrow}_{:, i}{{\nabla_{\Yv}^{\updownarrow}}{}_i}.$ 

We assigns each $\nabla_{\Yv_i}$ a probability $p_i\in [0, 1], i=1, \cdots, 2N$, that satisfies $\sum_{i=1}^{2N} p_i = N$. 
We define random masks $m_i\sim \Bern(p_i)$, and define $\tilde \Mv = \diag\left(\tfrac{m_1}{p_1}, \dots, \tfrac{m_{2N}}{p_{2N}}\right)$, and make an unbiased estimation:
\begin{align*}
\hat{\Iv}^{\updownarrow} {\nabla_{{\Yv}}^{\updownarrow}}\approx 
\hat{\Iv}^{\updownarrow} 
\tilde \Mv
{\nabla_{{\Yv}}^{\updownarrow}}=\sum_{i=1}^{2N} \frac{m_i}{p_i}{\nabla_{{\Yv}}^{\updownarrow}}{}_i.
\end{align*}

Define $\Mv^\uparrow$ to be the top-left $N\times N$ submatrix of $\Mv$ and 
$\Mv^\downarrow$ to be the bottom-right one, we have
\begin{align*}
\hat{\Iv}^{\updownarrow} 
\tilde \Mv
{\nabla_{{\Yv}}^{\updownarrow}}=
s_\uparrow \Iv \tilde \Mv^\uparrow\nabla_{{\Yv}}^{\uparrow}
+
s_\downarrow \Iv \tilde \Mv^\downarrow \nabla_{{\Yv}}^{\downarrow},
\end{align*}

In this case, $\Iv \tilde \Mv^\uparrow \nabla_{{\Yv}}^{\uparrow}$ and $\Iv \tilde \Mv^\downarrow \nabla_{{\Yv}}^{\downarrow}$ both only have parts of its rows being non zero, and the rest rows are zeros since they are discarded. Then, when we multiply it by $\hat\Wv$
, there are half of rows being zeros in $\Iv \tilde \Mv^\uparrow \nabla_{{\Yv}}^{\uparrow}\hat\Wv$ and $\Iv \tilde \Mv^\downarrow \nabla_{{\Yv}}^{\downarrow}\hat\Wv$. So there's no need to calculate them, and we successfully cut off half of the computation in this case.

Now focus on the variance that
\begin{proposition} (LSS variance for activation gradient)
\begin{align*}
\Var{\sum_{i=1}^{2N} \hat{\Iv}^{\updownarrow}_{:, i} {\nabla_{{\Yv}}^{\updownarrow}}{}_i}
= \sum_{i=1}^{2N} \frac{1-p_i}{p_i} \lVert{\nabla_{{\Yv}}^{\updownarrow}}{}_i\rVert^2.
\end{align*}
\end{proposition}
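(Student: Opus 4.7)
The plan is to mirror the proof of Proposition \ref{Proposition: weight} (the weight-gradient case) and reduce this to a routine independent-Bernoulli variance computation. As a preliminary, I would note that the statement as written is missing the factor $m_i/p_i$ inside the sum (otherwise the expression is deterministic); the intended object is the unbiased estimator $\hat{\Iv}^{\updownarrow}\tilde \Mv\,\nabla_{\Yv}^{\updownarrow} = \sum_{i=1}^{2N} \tfrac{m_i}{p_i}\,\hat{\Iv}^{\updownarrow}_{:,i}{\nabla_{\Yv}^{\updownarrow}}_i$ discussed just above, and this is what I would prove the variance formula for.

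The main steps, in order, would be: (1) Since $\{m_i\}_{i=1}^{2N}$ are mutually independent Bernoulli variables, and each matrix summand $X_i := \tfrac{m_i}{p_i}\,\hat{\Iv}^{\updownarrow}_{:,i}{\nabla_{\Yv}^{\updownarrow}}_i$ depends only on $m_i$, linearity of variance (applied entrywise under $\Var[\Xv]:=\Eb\|\Xv-\Eb\Xv\|_F^2$, then summed) gives $\Var\bigl[\sum_i X_i\bigr]=\sum_i \Var[X_i]$. (2) For each $i$, the deterministic rank-one matrix $\Av_i := \hat{\Iv}^{\updownarrow}_{:,i}{\nabla_{\Yv}^{\updownarrow}}_i$ factors out of the variance, yielding $\Var[X_i] = \Var[m_i/p_i]\cdot \|\Av_i\|_F^2 = \tfrac{1-p_i}{p_i}\,\|\Av_i\|_F^2$, using $\Var[m_i] = p_i(1-p_i)$. (3) The outer-product identity $\|\uv\vv^\top\|_F^2 = \|\uv\|^2\|\vv\|^2$ yields $\|\Av_i\|_F^2 = \|\hat{\Iv}^{\updownarrow}_{:,i}\|^2\,\|{\nabla_{\Yv}^{\updownarrow}}_i\|^2$. (4) Finally, because $\hat{\Iv}^{\updownarrow} = [\Iv\;\;\Iv]$, every column $\hat{\Iv}^{\updownarrow}_{:,i}$ is a standard basis vector of $\Rb^N$, so $\|\hat{\Iv}^{\updownarrow}_{:,i}\|^2 = 1$. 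Combining these gives the claimed formula, and by Cauchy--Schwarz (exactly as in the weight-gradient proof) the minimizer under $\sum_i p_i = N$ is $p_i\propto c_i := \|{\nabla_{\Yv}^{\updownarrow}}_i\|$, recovering the leverage score stated in the main text.

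There is no real obstacle in the argument; the only subtlety worth flagging is the matrix-valued variance. I would make explicit that $\Var[\cdot]$ here is the Frobenius-norm variance defined in Proposition \ref{Proposition: weight}, so that the decomposition $\Var[\sum_i X_i]=\sum_i \Var[X_i]$ for independent matrix-valued random variables reduces to the scalar case entrywise and then sums the $(j,k)$-variances into the Frobenius norm. Everything else is essentially the same bookkeeping as Appendix \ref{appendix: subsec: Proof of Proposition weight}, but simplified because one factor in the outer product is a unit basis vector rather than a row of $\hat{\Xv}$.
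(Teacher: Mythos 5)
Your proof is correct and takes essentially the same route as the paper's: independence of the Bernoulli masks makes the (Frobenius-norm) variance additive over the $2N$ summands, $\Var{m_i/p_i}=(1-p_i)/p_i$ scales the deterministic rank-one term, and $\norm{\uv\vv^\top}_F^2=\norm{\uv}^2\norm{\vv}^2$ together with $\norm{\hat{\Iv}^{\updownarrow}_{:,i}}=1$ yields the stated formula. You are also right that the displayed statement omits the $m_i/p_i$ factor; the paper's own proof silently reinstates it in its first line, exactly as you propose.
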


\begin{proof}
\begin{align*}
    Var(\nabla_{\Xv}) &= Var\Big(\sum_{i=1}^{2N} \frac{1}{p_i} ( m_i \hat{\Iv}^{\updownarrow}_{:, i} {\Xv}_{i}^\updownarrow )\Big) \\
    &= Var\Big(\sum_{i=1}^{2N} \frac{1}{p_i} (\sum_{j=1}^C \sum_{k=1}^D  m_i \hat{\Iv}^{\updownarrow}_{j, i}  {\nabla_{{\Yv}}^{\updownarrow}}{}_{i, k})\Big) \\
    &= \sum_{i=1}^{2N} \frac{p_i(1-p_i)}{p_i^2} Var\Big((\sum_{j=1}^C \sum_{k=1}^D \hat{\Iv}^{\updownarrow}_{j, i} {\nabla_{{\Yv}}^{\updownarrow}}{}_{i, k})\Big) \\
    &= \sum_{i=1}^{2N} \frac{1-p_i}{p_i} \big(\sum_{j=1}^C \sum_{k=1}^D ({\hat{\Iv}}_{j, i}^{\updownarrow})^2 ({{\nabla_{{\Yv}}^{\updownarrow}}{}_{i, k}})^2\big) \\
    &= \sum_{i=1}^{2N} (\frac{1}{p_i} - 1) \big(\sum_{j=1}^C ({\hat{\Iv}^{\updownarrow}_{j, i}})^2)(\sum_{k=1}^D ({{\nabla_{{\Yv}}^{\updownarrow}}_{i, k}})^2\big) \\
    &= \sum_{i=1}^{2N} (\frac{1}{p_i} - 1) \lVert {\hat{\Iv}^{\updownarrow}_{:, i} }\rVert^2\lVert{{\nabla_{{\Yv}}^{\updownarrow}}_i}\rVert^2 \\
    &= \sum_{i=1}^{2N} (\frac{1}{p_i} - 1) \lVert{{\nabla_{{\Yv}}^{\updownarrow}}_i}\rVert^2. 
\end{align*}
\end{proof}

In this way, the coefficient $c_i:= \lVert{\nabla_{{\Yv}}^{\updownarrow}}{}_i\rVert$ is the \emph{leverage score}.

\section{Experiments.}\label{appendix: sec: Experiments}
In this section, we present more details for experiments in Sec.~\ref{sec: experiments}.

\subsection{Experiments setup}\label{appendix: subsec: Experiments setup}

For the GLUE, QA, SWAG, and CONLL tasks, we implement our algorithm based on \url{https://github.com/huggingface/transformers}. For the machine translation task, we implement our algorithm based on \url{https://github.com/facebookresearch/fairseq}. For the ViT fine-tuning task, we implement our algorithm based on \url{https://github.com/jeonsworld/ViT-pytorch}. For the deit pretraining task, we implement our algorithm based on \url{https://github.com/facebookresearch/deit}.

We employed NVIDIA GeForce RTX 3090 for running most of the experiments, while the NVIDIA A40 was utilized to evaluate the performance of BERT-Large and ViT-L. Furthermore, we conducted runtime measurements using the NVIDIA T4, 3090, and A100 GPUs.

\subsection{GLUE results}\label{appendix: subsec: GLUE results}
In this section, we present the detailed result of fine-tuning the GLUE dataset on BERT-base-uncased and BERT-large-uncased.

On BERT-base, on STSB, SST2, QNLI, and QQP, HQ+LSS only has $<0.5\%$ accuracy degradation. On the most challenging tasks CoLA and RTE, our accuracy degradation is much smaller compared to LSQ+LUQ. On QQP and MNLI, our method achieves $<1.3\%$ degradation, while LSQ + LUQ has $\geq 1.8\%$ degradation. The trend is that the more difficult the task is, the more significant our advantage over LSQ+LUQ.

On BERT-large, the improvement is significant. On CoLA, QNLI, and MNLI, the accuracy improvement compared with LSQ+LUQ $> 30\%$. On other datasets like SST2 and QQP, the accuracy improvement is $> 10\%$. On RTE the accuracy improvement is $>5\%$, and on STSB and MRPC the improvement is $>3\%$.

We suspect that for those challenging tasks, there is more information stored in the outliers, which results in a larger gap between our method and LSQ+LUQ. 

\begin{table}[t]
    \caption{GLUE results on BERT-base-uncased and BERT-large uncased. FP refers to full precision training, INT8 refers to INT8 training, LSQ + LUQ refers to learned step size quantization for forward and logarithmic unbiased quantization for backward, and HQ + LSS refers to Hadamard quantization for forward and leverage score sampling for backward.}
    \label{appendix: table: GLUE experiments}
    \centering
    \begin{footnotesize}
    \begin{sc}
    \resizebox{0.98\linewidth}{!}{
    \begin{tabular}{cccccc}
    \toprule
         ~ & ~ & \multicolumn{4}{c}{Quantization Methods}                   \\
        \cmidrule(r){3-6}
        Model & Dataset & FP & INT8 & LSQ+LUQ & HQ+LSS \\ 
        \midrule
        
        \multirow{9}{*}{Bert-base} & CoLA & $56.89_{0.64}$ & $56.15_{0.94}$ & $18.76_{3.58}$ & \boldsymbol{$52.46_{1.46}$} \\ 
        
        ~ & STSB & $88.14_{0.73}$ & $87.05_{0.38}$ & $84.31_{0.29}$ & \boldsymbol{$87.77_{0.30}$} \\ 
        
        ~ & RTE & $64.80_{1.26}$ & $62.27_{1.26}$ & $56.80_{0.92}$ & \boldsymbol{$62.45_{1.08}$} \\ 
        
        ~ & MRPC & $88.61_{0.66}$ & $86.85_{0.76}$ & $86.23_{0.67}$ & \boldsymbol{$86.54_{0.83}$} \\ 
        
        ~ & SST2 & $92.72_{0.06}$ & $92.37_{0.17}$ & $90.37_{0.46}$ & \boldsymbol{$92.49_{0.29}$} \\ 
        
        ~ & QNLI & $91.52_{0.22}$ & $90.92_{0.24}$ & $87.33_{0.48}$ & \boldsymbol{$90.53_{0.23}$} \\ 
        
        ~ & QQP & $91.09_{0.11}$ & $90.57_{0.05}$ & $89.26_{0.03}$ & \boldsymbol{$89.80_{0.05}$} \\ 
        
        ~ & MNLI & $84.52_{0.22}$ & $84.10_{0.08}$ & $81.79_{0.18}$ & \boldsymbol{$83.59_{0.12}$} \\ 

        ~ & MNLI-mm & $84.68_{0.20}$ & $84.49_{0.31}$ & $82.22_{0.33}$ & \boldsymbol{$83.75_{0.28}$} \\ 
        
        \midrule
        
        \multirow{8}{*}{Bert-large} & CoLA & $60.33_{0.49}$ & $58.80_{1.52}$ & $0.00_{0.00}$ & \boldsymbol{$53.46_{1.17}$} \\ 
                
        ~ & STSB & $87.59_{2.39}$ & $86.53_{0.20}$ & $83.08_{0.41}$ & \boldsymbol{$87.57_{0.78}$} \\ 
                
        ~ & RTE & $71.12_{1.80}$ & $63.71_{1.26}$ & $53.06_{0.72}$ & \boldsymbol{$64.62_{0.78}$} \\ 
                
        ~ & MRPC & $91.06_{0.28}$ & $87.57_{1.47}$ & $82.56_{0.59}$ & \boldsymbol{$87.62_{0.51}$} \\ 
                
        ~ & SST2 & $93.98_{0.17}$ & $93.75_{0.63}$ & $83.94_{0.69}$ & \boldsymbol{$93.52_{0.40}$} \\ 
                
        ~ & QNLI & $92.26_{0.05}$ & $92.29_{0.29}$ & $63.18_{13.10}$ & \boldsymbol{$91.53_{0.38}$} \\ 
                
        ~ & QQP & $91.04_{0.63}$ & $90.71_{0.00}$ & $75.62_{12.44}$ & \boldsymbol{$90.77_{0.02}$} \\ 
                
        ~ & MNLI & $86.71_{0.19}$ & $85.82_{0.08}$ & $33.42_{1.38}$ & \boldsymbol{$85.86_{0.10}$} \\ 

        ~ & MNLI-mm & $86.41_{0.35}$ & $85.87_{0.14}$ & $33.54_{1.55}$ & \boldsymbol{$85.82_{0.07}$} \\ 
        
        \bottomrule
    \end{tabular}}
\end{sc}
\end{footnotesize}
\end{table}

\begin{table}[t]
    \caption{Experiments on GPT2-base and Bert-large. Total time spent for epoch 1-5 are reported.}
    \label{appendix: table: LLM Operator Speed}
    \centering
    \begin{footnotesize}
    \begin{sc}
    \resizebox{0.98\linewidth}{!}{
    \begin{tabular}{ccccc}
    \toprule
         ~ & ~ & \multicolumn{2}{c}{Training Methods}                   \\
        \cmidrule(r){3-4}
        Model & (hidden\_size, intermidiate\_size, batch\_size) & FP16 & HQ+LSS & SpeedUp \\ 
        \midrule
        
        \multirow{8}{*}{Bert-large} & (2560, 10240, 2048)  & 15.094s	& 18.949s & \boldsymbol{$-25.5\%$} \\ 
        
        ~ & (4096, 16384, 1280)	& 32.016s	& 30.594s	& \boldsymbol{$4.4\%$} \\ 
        
        ~ & (5120, 20480, 960)	& 47.418s	& 39.482s	& \boldsymbol{$16.7\%$} \\ 
        
        ~ & (7680, 30720, 600)	& 95.832s	& 67.253s	& \boldsymbol{$29.8\%$} \\ 
        
        ~ & (8960, 35840, 480)	& 128.441s	& 83.388s	& \boldsymbol{$35.1\%$} \\ 
        
        ~ & (9600, 38400, 160)	& 161.114s	& 114.325s	& \boldsymbol{$29.0\%$} \\ 
        
        ~ & (12800, 51200, 100)	& 326.265s	& 255.966s	& \boldsymbol{$21.5\%$} \\ 
        
        ~ & (14400, 57600, 96)	& 409.291s	& 346.354s	& \boldsymbol{$15.3\%$} \\ 
        
        \midrule
        
        \multirow{7}{*}{GPT2-base} & (2560, 10240, 1536)  & 17.253s	& 22.037s	& \boldsymbol{$-27.7\%$} \\ 
                
        ~ & (4096, 16384, 960)	& 35.937s	& 35.694s	& \textasciitilde \\ 
                
        ~ & (5120, 20480, 768)	& 52.723s	& 46.548s	& \boldsymbol{$11.7\%$} \\ 
                
        ~ & (7680, 30720, 260)	& 113.855s	& 92.548s	& \boldsymbol{$18.7\%$} \\ 
                
        ~ & (8960, 35840, 200)	& 150.680s	& 114.881s	& \boldsymbol{$23.8\%$} \\ 
                
        ~ & (9600, 38400, 180)	& 172.182s	& 126.540s	& \boldsymbol{$26.5\%$} \\ 
                
        ~ & (12800, 51200, 112)	& 320.757s	& 236.433s	& \boldsymbol{$26.3\%$} \\

        \bottomrule
    \end{tabular}}
\end{sc}
\end{footnotesize}
\end{table}

\begin{figure}[t]
    \centering
    \includegraphics[width=0.98\linewidth]{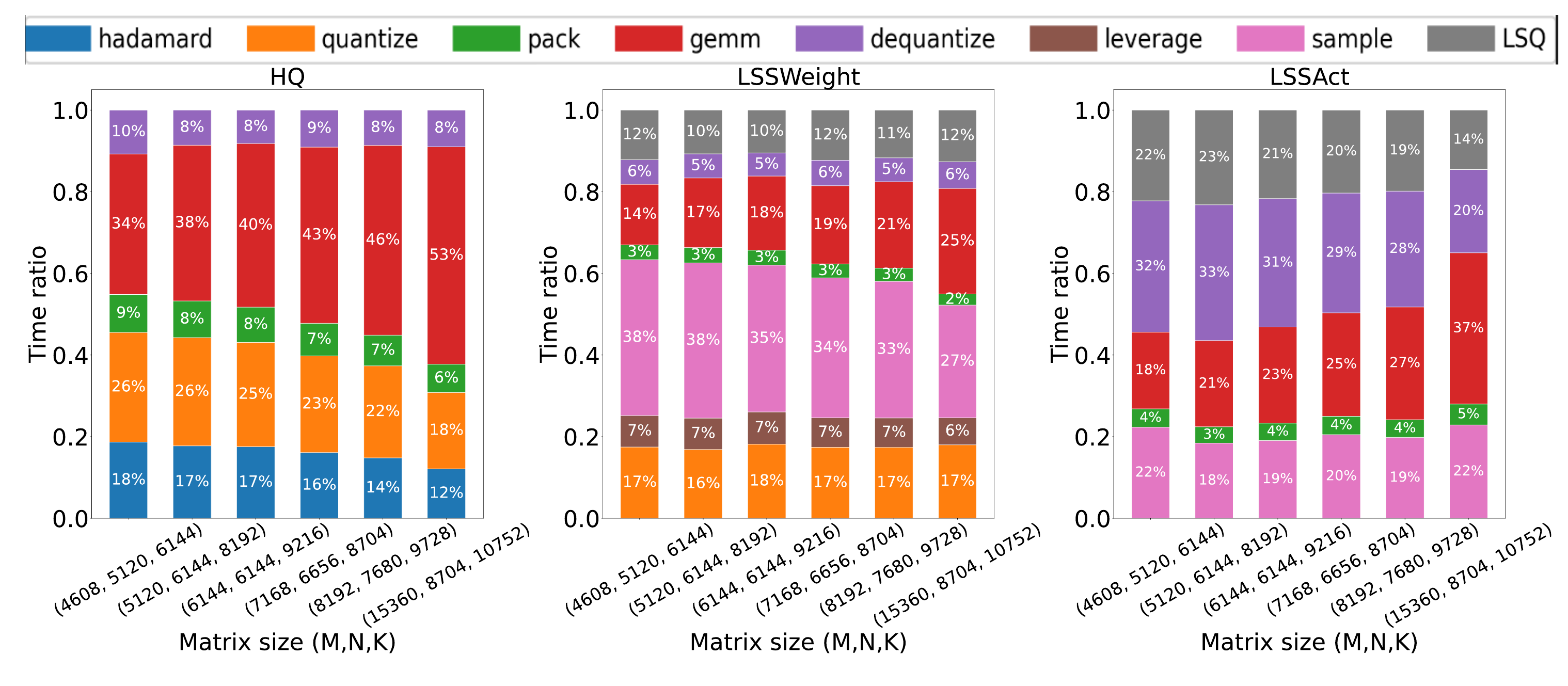}
    \caption{Time proportion for each part in \hqmm~and \lssmm~operator.}
    \label{fig: Time proportion}
\end{figure}

\begin{figure}[t]
    \centering
    \subfigure{\label{fig:Flops_T4}{\includegraphics[width=0.29\linewidth]{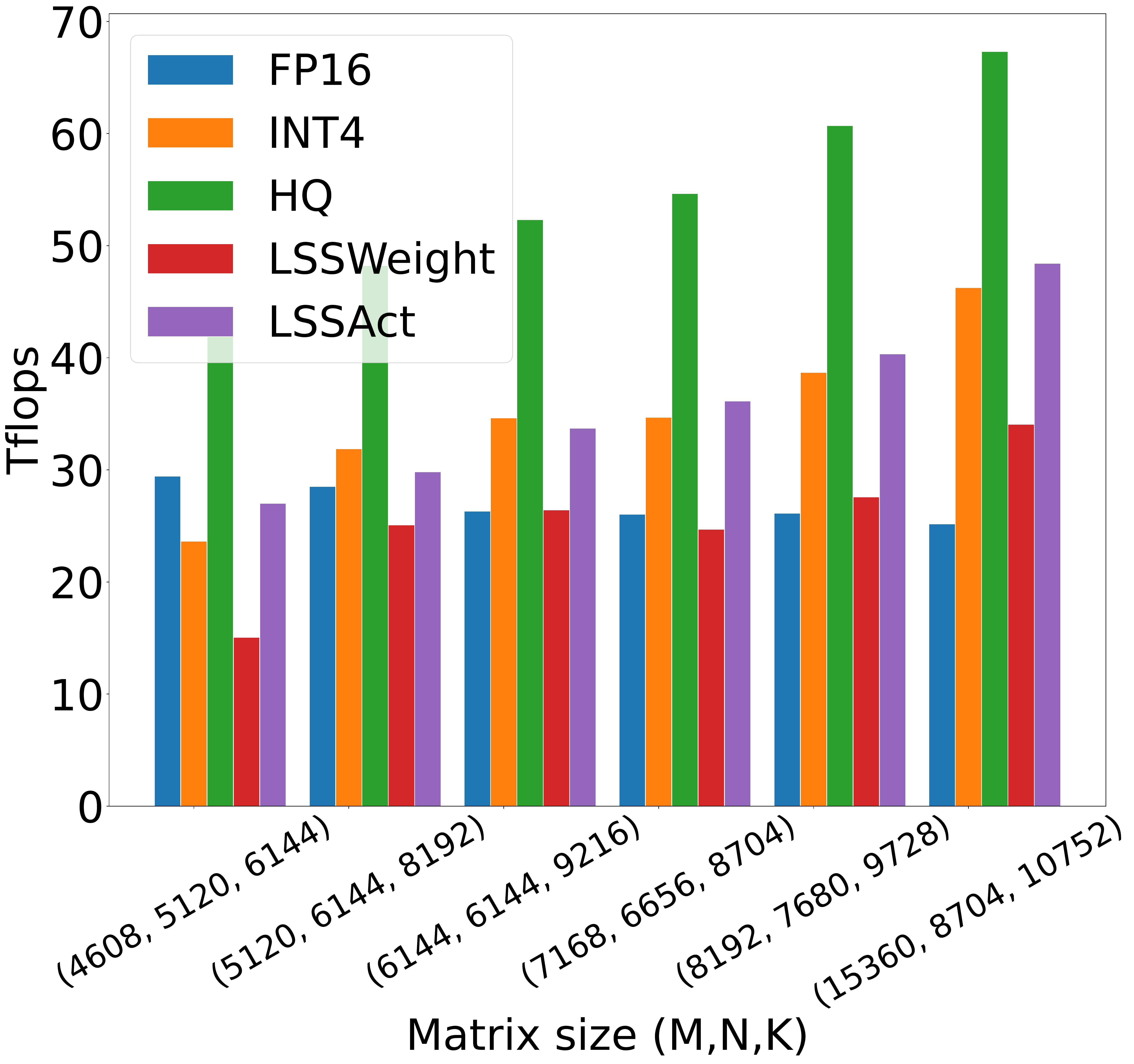}}}\hfill
    \subfigure{\label{fig:Time_T4}{\includegraphics[width=0.69\linewidth]{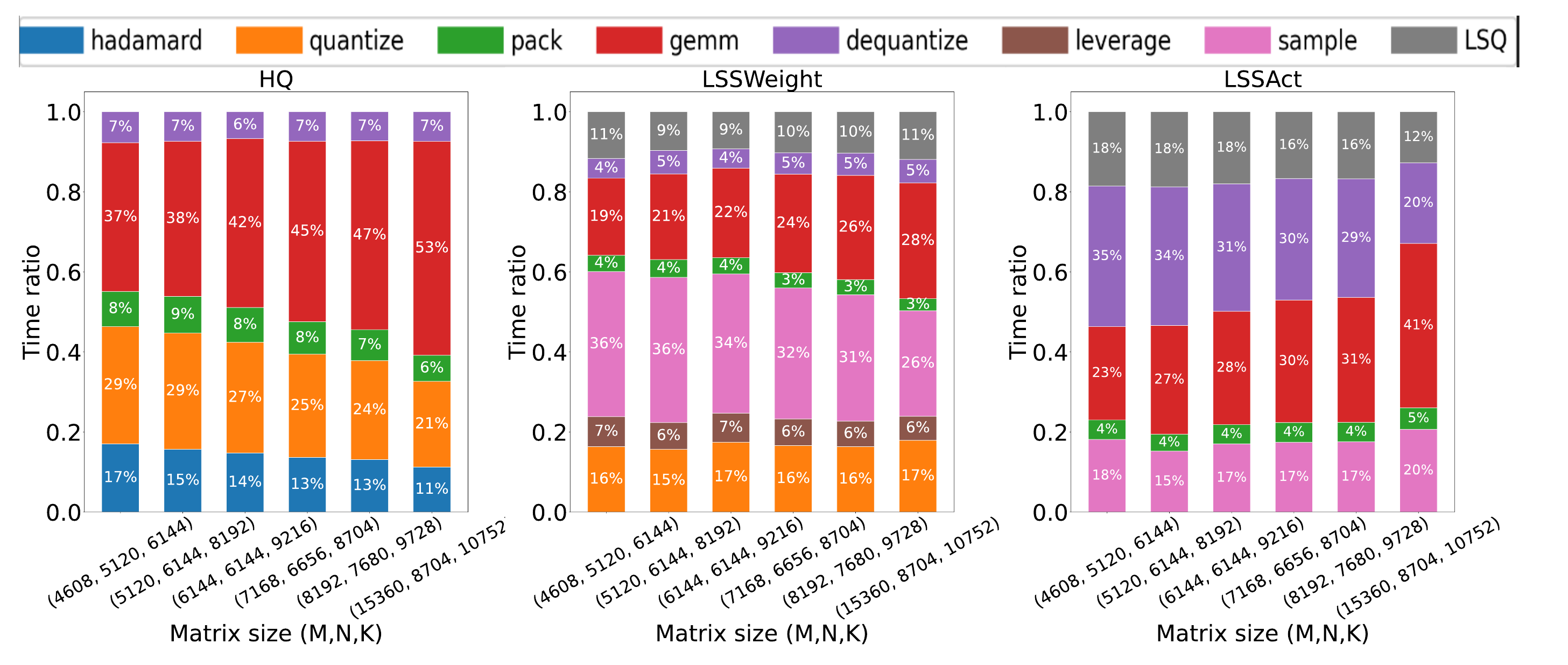}}}
    \caption{Real quantization performance on Nvidia T4.}
    \label{fig: Time proportion T4}
\end{figure}

\begin{figure}[t]
    \centering
    \subfigure{\label{fig:Flops_A100}{\includegraphics[width=0.29\linewidth]{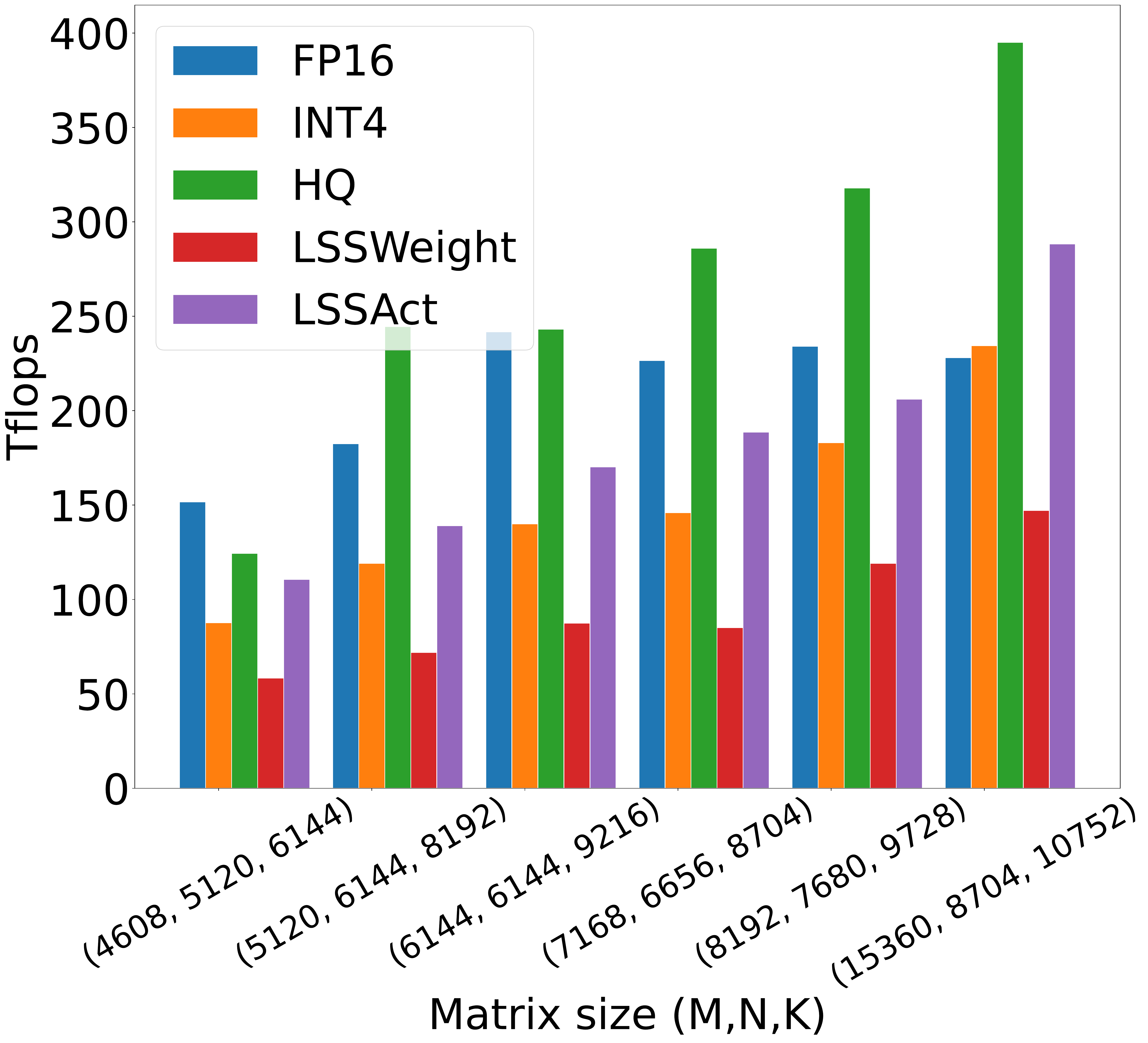}}}\hfill
    \subfigure{\label{fig:Time_A100}{\includegraphics[width=0.69\linewidth]{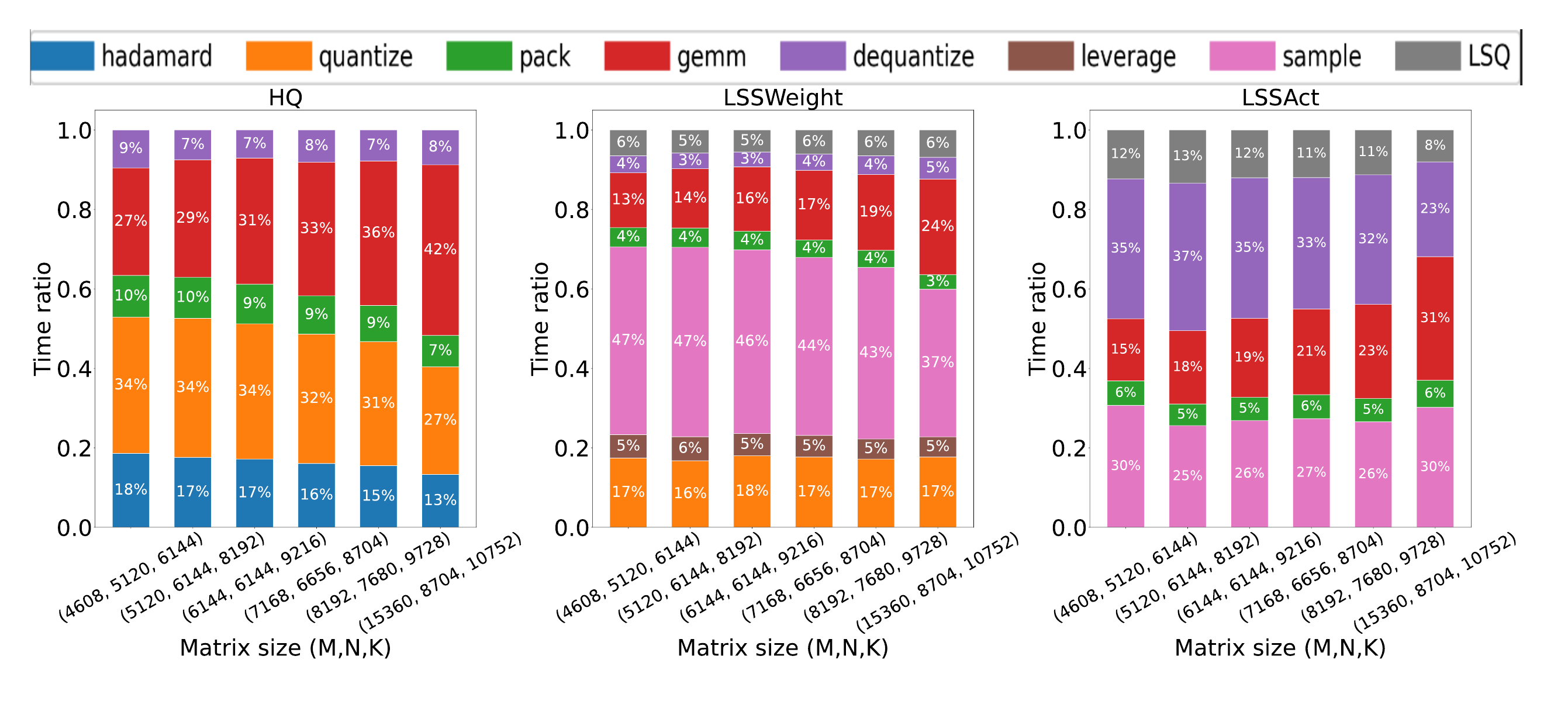}}}
    \caption{Real quantization performance on Nvidia A100.}
    \label{fig: Time proportion A100}
\end{figure}

\subsection{More Granular Quantization Methods}\label{appendix: subsec: Granular Quantization}
In this section, in Table~\ref{appendix: table: Granular Quantization Methods}, we show that the more granular quantization methods, such as per-token quantization and per-channel quantization, or smoothing techniques, such as SmoothQuant, do not work under the 4-bit FQT setting. Meanwhile, combining these methods with HQ will not bring significant improvement.

We find that LSQ is beneficial for all of these more granular quantization methods under low-bit settings, which highlights the importance of LSQ. Meanwhile, we also notice that the smoothquant will even harm the result of LSQ when the bit-width is low. Our explanation is that the motivation of LSQ is to learn a trade-off between outliers and inliers, while smoothquant aims to sacrifice the precision of inliers in order to exactly maintain the information of outliers. When the bitwidth is high, this is not a problem, since there are still enough bits to quantize the inliers. But when the bitwidth is low, such sacrifice will cause severe problems since the inlier information is discarded.

\begin{table}[!ht]
    \caption{Comparison of different quantization methods, quantize the activation and weight into the same bit-width from 2 to 8. Per-token refers to quantize activation per-token, while Per-channel refers to quantize weight per-channel.}
    \centering
    \label{appendix: table: Granular Quantization Methods}
    \begin{tabular}{cccccccc}
    
    \toprule
        ~ & \multicolumn{7}{c}{Quantize Bits} \\
        \cmidrule(r){2-8}
        quantization methods & 2 & 3 & 4 & 5 & 6 & 7 & 8 \\
        
        \midrule
        
        Per-tensor & 0 & 0 & 0 & 0 & 0 & 50.2 & 54.6 \\
        Per-token & 0 & 0 & 0 & 0 & 31.4 & 52.8 & 56 \\
        Per-channel & 0 & 0 & 0 & 0 & 0 & 51.9 & 56.7 \\
        smoothquant & 0 & 0 & 0 & 0 & 0 & 49.4 & 57.7 \\
        Per-token + Per-channel + smoothquant & 0 & 0 & 0 & 0 & 40.7 & 55.7 & 56.7 \\
        
        \midrule
        
        LSQ & 0 & 9.16 & 24.2 & 37.3 & 39.6 & 45.3 & 51.4 \\
        Per-token + LSQ & 0 & 15.3 & 27.8 & 31.6 & 42.9 & 46 & 54.4 \\
        Per-channel + LSQ & 0 & 8 & 23.9 & 29.3 & 40 & 45.5 & 50.7 \\
        smoothquant + LSQ & 0 & 0 & 0 & 0 & 49.6 & 54.9 & 57 \\
        Per-token + Per-channel + smoothquant + LSQ & 0 & 0 & 0 & 0 & 28.8 & 52.4 & 55.2 \\
        
        \midrule
        
        HQ & 0 & 45.2 & 54.6 & 54.2 & 56.5 & 57.4 & 58.4 \\
        HQ + Per-token + Per-channel & 0 & 48.4 & 54.1 & 54.9 & 55 & 56 & 56 \\
        HQ + Per-token + Per-channel + smoothquant & 0 & 0 & 46.6 & 54.9 & 55.9 & 55.8 & 56.5 \\    \bottomrule
    \end{tabular}
\end{table}
\subsection{Large Language Model Operator Speed}\label{appendix: subsec: llm speed}
In this section, we show that our hardware-friendly INT4 training method can really accelerate the training process on Large Language Models. We run distributed training on a system of 8 A100 cards and our implementation uses distributed data parallel training with zero-3, gradient checkpointing, and optimizer offloading. 

We experimented with two architectures: BERT-Large and GPT2-base. We vary the network width and batch size to make full utilization of the GPU memory and show the end-to-end performance for fine-tuning these models on the SuperGLUE RTE dataset in Table~\ref{appendix: table: LLM Operator Speed}.




\subsection{More experiments on Operator Speed}\label{appendix: subsec: operator speed}
\paragraph{Time proportion}
We examine the proportion of time for each part of computation in \hqmm~and \lssmm~operator in Fig.~\ref{fig: Time proportion} when the shapes of input matrices vary. In HQ, hadamard means multiplying the input matrix with the Hadamard matrix, pack means packing input data into INT4 data, gemm means the matrix multiplication of two INT4 matrices. In LSSWeight, quantize corresponds to the quantization of higher and lower 4-bit, leverage means computing leverage score, sample means sample out rows/columns given the leverage score, dequantize is the process of dequantizing INT data back into FP16 data, and LSQ is the backpropagation process of LSQ method. In LSSAct, we ignore quantize and leverage process, using the same value as LSSWeight for saving time, other processes share the same meaning with LSSWeight. Note that our implementation is not fully optimized, and optimizations like operator fusion can further improve the performance. 

\paragraph{Operator Speed on more GPUs}
On an Nvidia RTX 3090 GPU with a Cuda capability of sm\_86., we show the comparison of FP16 MM, HQ, and LSS operators in Section~\ref{subsec: operator speed} as well as time proportion in each operator in Figure.~\ref{fig: Time proportion}. We also adjust our hardware implementation and test its performance on Nvidia T4 GPU and Nvidia A100 GPU, which have Cuda capability of sm\_75 and sm\_80 , respectively. The result is shown in Fig.~\ref{fig: Time proportion T4} and Fig.~\ref{fig: Time proportion A100}.








\end{document}